\let\implies\Rightarrow
\newcommand{\R}{{\mathbb{R}}}
\newcommand{\N}{{\mathbb{N}}}
\newcommand{\Id}{\ensuremath{\mathrm{Id} }}
\newcommand{\dd}{\ensuremath{\mathrm d}}
\newcommand{\cA}{\ensuremath{\mathcal{A}}}
\newcommand{\cG}{\ensuremath{\mathcal{G}}}
\newcommand{\cH}{\ensuremath{\mathcal{H}}}
\newcommand{\cK}{\ensuremath{\mathcal{K}}}
\newcommand{\cL}{\ensuremath{\mathcal{L}}}
\newcommand{\cM}{\ensuremath{\mathcal{M}}}
\newcommand{\cP}{\ensuremath{\mathcal{P}}}
\newcommand{\cW}{\ensuremath{\mathcal{W}}}
\newcommand{\cX}{\ensuremath{\mathcal{X}}}
\newcommand{\bN}{\ensuremath{\mathbb{N}}}
\newcommand{\bR}{\ensuremath{\mathbb{R}}}
\newcommand{\rP}{\ensuremath{\mathrm{P}}}
\newcommand{\rY}{\ensuremath{\mathrm{Y}}}
\DeclareMathOperator*{\argmax}{arg\,max}
\DeclareMathOperator*{\argmin}{arg\,min}
\newcommand{\e}{\ensuremath{\varepsilon}}
\newcommand{\cond}{\;:\;}
\newcommand{\bary}{\ensuremath{\mathrm{Bar}}}
\newcommand{\SE}{\text{SE(2)}}
\newcommand{\WR}{\text{\bf WR }}
\newcommand{\SER}{\text{\bf WG }}
\newcommand{\LR}{\text{\bf L2R }}
\newcommand{\mc}[1]{w_{#1}}
\newcommand{\prox}{\text{prox}}
\let\epsilon\varepsilon
\newcommand{\Z}{{\mathbb{Z}}}
\DeclareMathOperator{\KL}{KL}
\newcommand{\haar}{\cH}
\newcommand{\dhaar}[1]{\rd\cH(#1)}
\newcommand{\rd}{\mathrm{{d}}}
\newcommand{\spt}{\text{{spt}}}
\let\act\triangleright
\newcommand{\norm}[1]{\left\lVert#1\right\rVert}
\title{Optimal Transport on the Lie Group of Roto-translations}
\author{Daan Bon\thanks{Centre for Analysis, Scientific Computing, and Applications, Eindhoven University of Technology (TU/e). {\bf email:} \{d.l.j.bon, g.pai, g.bellaard, o.mula, r.duits \}@tue.nl.} 
\and Gautam Pai\footnotemark[1] \thanks{The first two authors contributed equally.}
\and Gijs Bellaard\footnotemark[1]
\and Olga Mula\footnotemark[1]
\and Remco Duits\footnotemark[1]
}
\begin{document}
\maketitle

\begin{abstract}
The roto-translation group $\SE$ has been of active interest in image analysis due to methods that lift the image data to multi-orientation representations defined on this Lie group. This has led to impactful applications of crossing-preserving flows for image de-noising, geodesic tracking, and roto-translation equivariant deep learning. 
In this paper, we develop a computational framework for optimal transportation over Lie groups, with a special focus on $\SE$.
We make several theoretical contributions (generalizable to matrix Lie groups) such as the non-optimality of group actions as transport maps, invariance and equivariance of optimal transport, and the quality of the entropic-regularized optimal transport plan using geodesic distance approximations.
We develop a Sinkhorn-like algorithm that can be efficiently implemented using fast and accurate distance approximations of the Lie group and GPU-friendly group convolutions. We report valuable advancements in the experiments on 1) image barycentric interpolation, 2) interpolation of planar orientation fields, and 3) Wasserstein gradient flows on $\SE$. We observe that our framework of lifting images to $\SE$ and optimal transport with left-invariant anisotropic metrics leads to equivariant transport along dominant contours and salient line structures in the image. This yields sharper and more meaningful interpolations compared to their counterparts on $\mathbb{R}^2$. 
\end{abstract}

\begin{keywords}
Lie Groups, Roto-Translations, Optimal Transport, Wasserstein Barycenters, Group Convolutions 
\end{keywords}

\begin{MSCcodes}
62H35, 68U10, 90B06, 68T45, 68U99
\end{MSCcodes}

\section{Introduction}
\subsection{The Roto-translation Group} 
Lie groups provide a potent mathematical framework for understanding geometric transformations and symmetries that frequently arise in many real-world applications.  
Lie group theory provides a natural foundation for expressing these symmetries and enhances the efficiency and interpretability of computational models that incorporate their mathematical structure. 
The Lie group $\SE$ has been of particular interest in image processing. 
Firstly, it is natural to demand that any operation applied to the image (e.g. denoising, segmentation, feature extraction, etc.) has to be \emph{equivariant} to a roto-translation of the image function. 
Simply put, operating on the image after a roto-translation or roto-translating the image after the operation must yield an identical output. 
Secondly, in many applications, it is desirable to identify one or more planar orientations at each 2D location in the image. 
This enables the effective processing of important image sub-structures with directional attributes like lines, edges, crossings, bifurcations, etc. 
For example, in the domain of medical image processing, there is often a requirement of \emph{crossing preserving} de-noising of images. 
Many of these images contain line structures having multiple crossings and bifurcations and we require that the denoising method \emph{preserve} these structures \cite{citti2006cortical, FrankenPhDThesis,ghimpecteanu2015decomposition,  boscain2018highly}. 
Methods not accounting for such local orientation information often lack this property leading to distortionary results. 
Furthermore, medically significant line structures need to be spatially tracked leading to accurate measurements, which is useful for many further diagnoses.  
In such situations, the tracking algorithm must correctly identify and then move along the most cost-effective local orientation in order to extract the appropriate segment in that application. 
\begin{wrapfigure}{r}{0.5\textwidth}
\vspace{-1.3\intextsep}
\centering
\includegraphics[width=\linewidth]{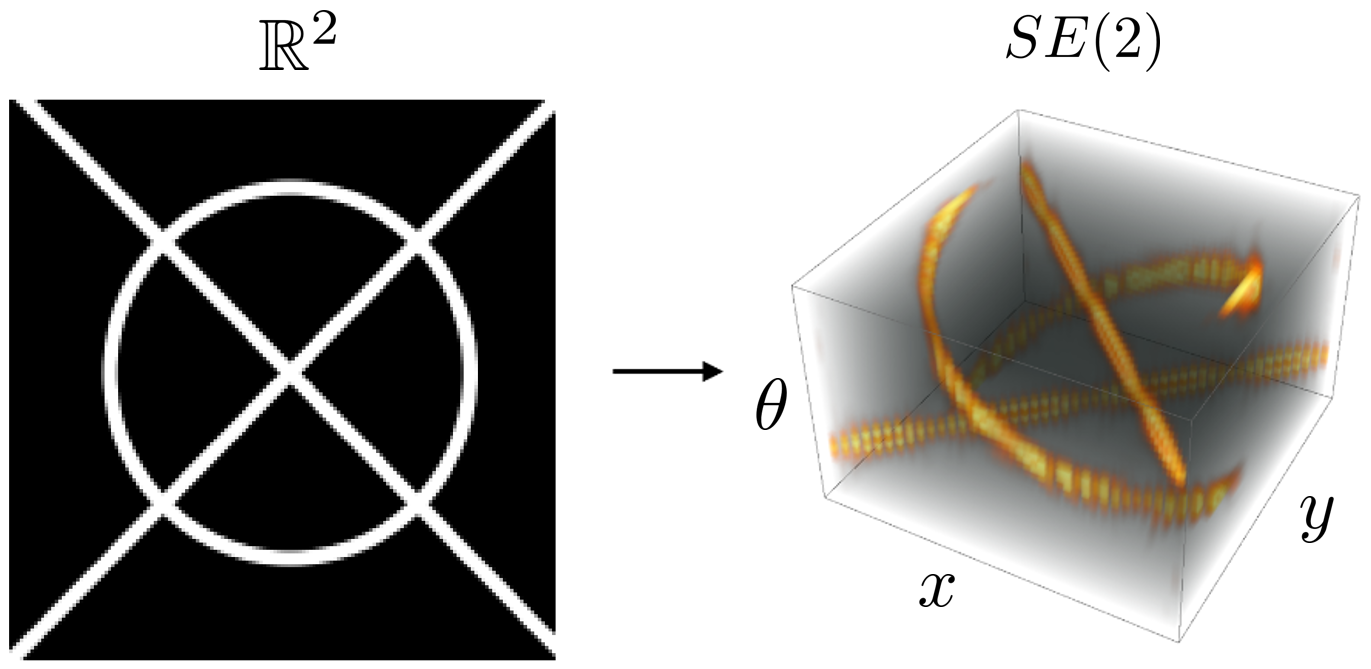}
\caption{An orientation score transform (see \cref{eq:lift}) disentangles line structures due to \emph{lifting} onto the Lie group $\SE$. This is a useful property in many applications like image de-noising, geodesic tracking, equivariant deep learning, etc. In this paper, we use it for the barycentric interpolation of images using optimal transport as seen in the schematic in \cref{fig:intro}} 
\label{fig:lifting_intro}
\vspace{-1\intextsep}
\end{wrapfigure}
This becomes especially critical at junctions that may yield multiple orientations and tracking on the image domain in $\mathbb{R}^2$ is typically insufficient to make the correct choice \cite{benmansour2011tubular, bekkers2015pde, vandenbergJMIV2023}. 
\begin{figure}[t]
\centering
\includegraphics[width=1\linewidth]{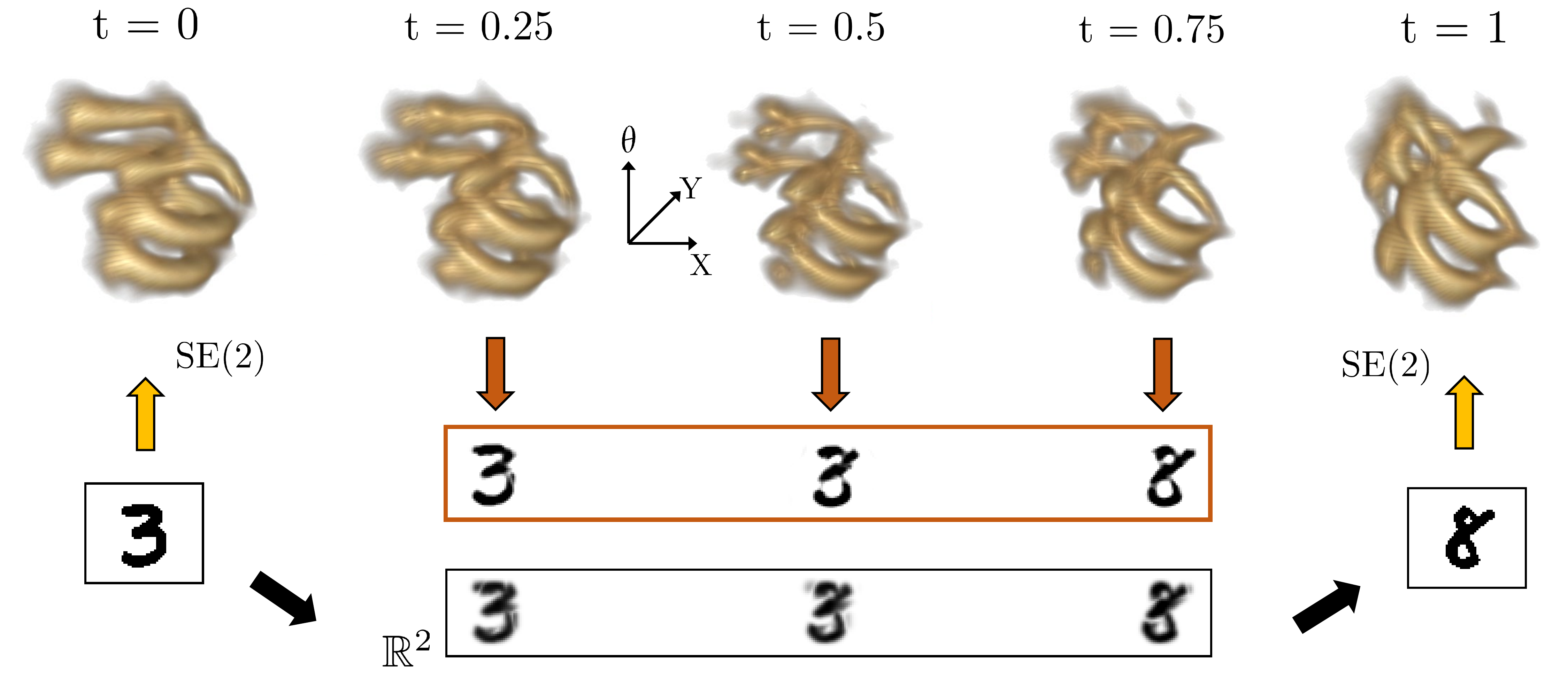}
\caption{
\textbf{Barycenter interpolation in $\cP(\SE)$:} To interpolate classical 2D images in $L^2(\bR^2)$ (in this case, an image of 3 and an 8), we first lift them to the space of probability measures $\cP(\SE)$ with operations involving  invertible orientation scores (see \cref{fig:lifting_intro} and equation \cref{eq:lift}). We then compute Wasserstein barycenters on that space (see \cref{equ:interpolation_operator}) using linear weights indicated by $t$. Finally, we project the output back to $L^2(\bR^2)$ (by integration over $\theta$, cf.~\cref{eq:proj}). We observe sharper and more meaningful interpolation in this process due to the left-invariant anisotropic metrics in $\SE$ that yield a unique contour-concentrating behavior of the optimal transport. This is in contrast to isotropic transport in $\R^2$ where one observes significant mass splitting and leakage away from the main contours.} 
\label{fig:intro}
\end{figure}

A common feature of orientation-aware methods that efficiently address these issues is that they \emph{lift} the image data into the homogeneous space of positions and orientations. The lifting is usually done through an orientation score transform as we depict in \cref{fig:lifting_intro} (see also \cite{citti2006cortical, duits2005perceptual, bekkers2017retinal, boscain2018highly, chambolle2019total}). 
The Lie group of roto-translations $\SE: = \mathbb{R}^2 \rtimes \text{SO}(2)$ acts naturally on this space of positions and orientations \mbox{$\mathbb{M}^2: = \mathbb{R}^2 \times S^1$}, which is also its principal homogeneous space. In fact, by fixing a reference position-orientation in \(\mathbb{M}^2\) the two spaces can be identified with each other. 
The geometric algorithm is then applied in this lifted space, and finally, the processed orientation score is projected back yielding the output. 
Methods that follow this broad workflow are equivariant to roto-translations of the input by design. 
This workflow is also the backbone behind many successful roto-translation equivariant deep-learning architectures (called G-CNNs): \cite{sifre2013rotation, bekkers2018roto, weiler2019general} that are more efficient and need less training data. 
The computational machinery in all these methods indispensably use differential geometric structures like left-invariant Riemannian and sub-Riemannian geodesics, distances and kernels of the Lie group $\SE$. 

Despite these impactful applications, there has been little progress on the theory and applications of optimal transport problems in this domain. 
We report a relevant related work \cite{galeotti2022cortically} that explores image morphing in the visual cortex using optimal transport. 
Functions in $\mathbb{R}^2$ are lifted onto this cortical space using Gabor filters parameterized by position, orientation, and scale. However, in our article, we uniquely exploit the $\SE$ group structure (analytic distance approximations and group convolutions) to provide a scalable (Sinkhorn-like) algorithm. We apply our algorithm to orientation scores (constructed by 'all-scale' cake-wavelets), multitude of images, orientation fields, and to compute numerical solutions to PDEs. Finally, we provide new theoretical results as outlined in \cref{sec:contributions}. 

\subsection{Optimal Transport on Geometric Domains}
Optimal transport (OT) provides a principled and versatile approach to work with measures defined on very general spaces including Riemannian manifolds and $\SE$. Among the many ideas coming from OT, we may mention mathematical tools to compare or interpolate between probability distributions.
The prototypical example of an OT problem is the two-marginal Monge-Kantorovich assignment: given  two Polish spaces $X_1$ and $X_2$ and given two probability measures $\mu\in \cP(X_1)$ and $\nu\in \cP(X_2)$, solve 
\begin{equation}
\label{eq:MK}
\inf \biggl\{ \int_{X_1\times X_2} c(x_1,x_2) \rd\pi(x_1,x_2) \cond \pi \in \Pi(X_1 \times X_2; \mu , \nu) \biggr\}
\end{equation}
where $c:X_1\times X_2\mapsto \bR^+$ is a lower semi-continuous cost function. The infimum runs over the set $ \Pi(X_1 \times X_2; \mu , \nu) $ of coupling measures (usually called transport plans) on $X_1\times X_2$ with marginal distributions equal to $\mu$ and $\nu$ respectively. More generally, the problem can be multi-marginal as we introduce later on in \cref{sec:OT}.

Starting with the seminal formulations by Monge \cite{monge1781memoire}, Kantarovich \cite{kantorovich1942On} and later on by Brenier \cite{brenier2003extended}, the theoretical discourse on optimal transport is very extensive \cite{villani2003topics, villani2008optimal, santambrogio2015optimal}. It has been shown to have rich and diverse connections to the theory of Partial Differential Equations (gradient flows \cite{ambrosio2005gradient}, fluid mechanics \cite{Benamou1999computational}, quantum chemistry \cite{CFP2015, BCN2016}), probability theory \cite{ambrosio2005gradient}, and geometric analysis \cite{mccann2001polar, lott2006some} just to name a few. Many important PDEs like linear and non-linear diffusion and advection equations can be viewed as gradient flows on Wasserstein measure spaces. Such PDEs have a neurogeometrical significance when they are solved on $\SE$ \cite{petitot2003neurogeometry, citti2006cortical,FRANCESCHIELLO201955} and also have practical impact when combined with neural architectures \cite{smets2023pde} in deep learning.

In recent years, the development of efficient numerical approaches has opened the door to applying OT to numerous applications, notably in computer vision, image processing, computer graphics, and machine learning \cite{ferradans2014regularized, delon2020wasserstein, mandad2017variance, merigot2018algorithm, lacombe2018large, alvarez2020geometric}. Among the many numerical strategies that have been proposed in the literature, probably the most popular, and prominent direction consists in adding an entropic regularization term to the problem \cite{cuturi2013sinkhorn, benamou2015iterative, solomon2015convolutional, feydy2019interpolating}, and considering an approximation of the measures by discrete measures via sampling or discretization on a grid. The discretization, and the additional entropy term lead to an approximation of the transport plan that can be computed using a fixed-point strategy, popularly known as the Sinkhorn algorithm. The Sinkhorn iterations have a linear convergence and only require convolutions with Gibbs kernels, thereby computing a smoothed version of the transport plan. For discretized geometric domains like triangulated meshes, point clouds, and graphs, a particularly widespread solution is the use of heat kernels \cite{solomon2015convolutional, varadhan1967behavior} in lieu of the distance-based Gibbs kernel. This avoids the need for computing distances and only evolving the heat equation which is in many applications numerically more straightforward on these domains. 
These advancements have been enormously beneficial in many applications: barycenters and shape interpolation \cite{cuturi2014fast, courty2018learning}, generative models for machine learning \cite{onken2020OTflow, arjovsky2017wasserstein, houdard2023generative}, 3D shape registration and correspondence \cite{eisenberger2020deep, shen2021accurate, pai2021fast} to name a few.  We refer to some excellent in-depth surveys in \cite{papadakis2015optimal, peyre2020computational, bonneel2023survey} for a more exhaustive list of the diverse applications of optimal transport. More closely related to the theme of this paper, optimal transport has also been studied for the interpolation of vector and tensor valued measures in \cite{solomon2019optimal, peyre2019quantum} as well as the rotation group SO(3) \cite{quellmalz2024parallelly, birdal2020synchronizing}. As we demonstrate later in \cref{sec:vector-fields}, orientation fields in $\mathbb{R}^2$ can be naturally lifted onto the homogeneous space of the Lie group $\SE$ where we can employ anisotropic metrics for meaningful interpolation.

\subsection{Contributions}
\label{sec:contributions}
In this paper, we formulate various optimal transport problems like the Wasserstein distance, barycenters, and gradient flows on the roto-translation group $\SE$ by operating on its principal homogeneous space $\mathbb{M}^2 = \mathbb{R}^2 \times S^1$. In fact, most of our theoretical results can be generalized to any finite-dimensional Lie group, but we focus on numerical experiments involving $\SE$ by lifting data from $\mathbb{R}^2$ to $\mathbb{M}^2$ (see \cref{fig:intro}). Our main contributions in this paper can be enumerated as follows:

\begin{enumerate}
\item \textbf{Entropic OT on Lie groups:} We extend the main result of \cite{solomon2015convolutional} by demonstrating that entropic optimal transport on Lie groups can be solved with the Sinkhorn algorithm involving only \emph{group-convolutions} with left-invariant kernels on the input distributions. We show that left-invariant costs lead to the invariance of the Wasserstein distance and equivariance of barycenters and gradient flow iterations for the same group actions on all input arguments of these applications.        
\item \textbf{Efficient approximation of the OT cost function $c$:} The numerical cost of the Sinkhorn algorithm quickly becomes prohibitive when the number of discretization points increases, and when the evaluation of the cost function at each of the mesh points is not negligible. For OT problems formulated on general Riemannian manifolds like $\SE$, the cost function $c$ is usually the geodesic distance between points of the discretization mesh in $\SE$. Since it has no closed form,  one needs to rely on fast approximations to solve the OT problem. In this paper, we address this issue by leveraging the logarithmic coordinates of $\SE$ \cite{bellaard2022analysis} to obtain a fast and accurate approximation of the Gibbs kernel. We make three experimental demonstrations:
\begin{enumerate}
\item computing barycenters by lifting images to $\SE$ (the conceptual flow is summarized in \cref{fig:intro}),
\item interpolations of 2D orientation fields, and
\item solving PDEs on $\SE$ using Wasserstein gradient flows.
\end{enumerate}
Our experiments reveal favorable interpolations, aided by the lifting and anisotropic metrics that promote transport along salient line structures of the image. This helps reduce significant mass-splitting and promotes sharper results in the interpolations.
\item \textbf{Theoretical results:} 
\begin{enumerate}
    \item We establish that for a wide class of Lie Groups, including $\SE$, the right (and left) group action is generally \underline{not} equal to the optimal transport map between a final distribution that is transformed by a group action of an initial distribution. However, such an equality statement does hold for Euclidean groups like $(\mathbb{R}^n, +)$ and also for certain sub-Riemannian metrics and boundary-conditions on the Heisenberg group \cite[Example 5.7]{ambrosio2004optimal}.
    \item We derive corresponding bounds between Wasserstein distances constructed with the exact Riemannian distance and our distance approximations on the Lie group and also derive a similar result for the minimizing regularized optimal transport plans. This provides a theoretical justification for the use of the proposed approximations in practice. 
\end{enumerate}
\end{enumerate}

\subsection{Structure of the Article}
In \cref{sec:lie_groups}, we begin by giving a background on Lie groups, optimal transport, and details about our main numerical example $\SE$. In \cref{sec:symm_ot} we explore the symmetries of optimal transport on Lie groups and additional results on the optimality of these symmetries. In \cref{sec:ER_scaling} we formulate the entropic regularization for the OT problem in $\SE$, and build a Sinkhorn algorithm, based on group convolutions and a distance approximation that can be efficiently implemented in this domain. We also show that error bounds on the approximation give rise to error bounds on the OT problems. Finally, in \cref{sec:numerical_results} we report the results of numerical experiments on images, orientation fields, and gradient flows on $\SE$. 
\section{Background: Optimal Transport and Lie Groups}
\label{sec:lie_groups}

\subsection{Lie Groups}
\label{sec:liegroup_riemannian}
We recall that a Lie group is a smooth manifold $G$ equipped with a group structure such that the group multiplication $\texttt{mult}: G\times G\to G$ given by $\texttt{mult}(g,h)=gh$ and the inverse $\texttt{inv}: G\to G$ given by $\texttt{inv}(g)=g^{-1}$ are smooth maps. The operations of left and right translation by a fixed element $g\in G$ are defined as the mappings
\begin{equation}
\label{def:left_right_translation_group}
L_g : G \rightarrow G, \;\; L_g(h) = g h \;\;\;\text{ and } \;\;\; R_g : G \rightarrow G, \;\; R_g(h) = hg.
\end{equation}
Every $g \in G$ has an associated tangent space $T_g G$ which is the linear space spanned by all tangent vectors of curves passing through $g$. This tangent space has a dual space, denoted by $T^*_g G$, called the cotangent space. The tangent space at the identity element $e\in G$, $T_e G$, is called the Lie algebra. 
For a smooth map $\varphi:G\to N$ between $G$ and another smooth manifold $N$, the differential of $\varphi$ at $p\in G$ is a linear map
$
(\dd\varphi)_p : T_p G \to T_{\varphi(p)}N
$
from the tangent space of $G$ at $p$ to the tangent space $N$ at $\varphi(p)$. The image of $\dd\varphi_p(\dot p)$ of a tangent vector $\dot p \in T_p G$ under $\dd\varphi_p$ is called the push-forward of $\dot p$ by $\varphi$. 
In the case $\varphi=L_g$ for a given $g\in G$, note that
$$
(\dd L_g)_p : T_p G \to T_{gp}G.
$$

\subsubsection*{Group Action, Invariance, and Equivariance}

Given a group \(G\) and a set \(X\), a \textit{group action} \(A_g : X \to X\) is a mapping that satisfies \(A_{g_2} \circ A_{g_1} = A_{g_2 g_1}\) for all $g_1,g_2 \in G$ and \(A_{e} = \text{id}_X\). We say that \(G\) \textit{acts} on \(X\). 
Let \(A_g : X \to X\) and \(B_g : Y \to Y\) be two group actions. 
A function \(\varphi : X \to Y\) is \textit{invariant} (under $A_g$) if \(\varphi \circ A_g = \varphi\).
A function \(\varphi : X \to Y\) is \textit{equivariant} (w.r.t. $A_g$ and $B_g$) if \(\varphi \circ A_g = B_g \circ \varphi\).
In the specific case where the group action is the left-translation \(L_g\) (or some corresponding derived notion) we speak of \textit{left-invariance}, analogously, when the group action is the right-translation \(R_g\) we speak of \textit{right-invariance}.

\subsubsection*{Left Invariant Metric Tensor and Distance}

As an additional structure, we endow the group $G$ with a metric tensor field $\cG$, turning it into a Riemannian manifold which we denote by $(G, \cG)$. The metric tensor at a point $p\in G$ is a bilinear symmetric positive definite map $\cG_p: T_pG \times T_pG \to \R$ which defines an inner product in the linear space $T_pG$. The mapping $p\mapsto \cG_p$ is a smooth function of $p$. The metric tensor induces a natural way of defining a distance \(d_\cG\) on $G$ as
\begin{equation}
\label{eq:geodesic-G}
    d_\cG(p, h) := \inf_{ \substack{\gamma \in PC([0, 1]; G) \\ \gamma(0)=p,\, \gamma(1)=h} } L(\gamma), \quad \text{ where } L(\gamma) := \int_0^1 \sqrt{ \cG_{\gamma(t)} (\dot{\gamma}(t), \dot{\gamma}(t))} \rd t,
\end{equation}
where $\text{PC}([0,1],G)$ denotes the family of piece-wise continuously differentiable curves in $G$. We omit the subscript from $d_\cG$ if it is clear from the context what metric is considered. 

We say that the metric tensor is left-invariant if and only if for all \(g,h \in G\) and \(u,v \in T_h G\):
\begin{align} 
\label{eq:left-inv-metric-tensor}
\cG_h (u, v)
=
\cG_{L_g(h)}( (\dd L_g)_{h} u, (\dd L_g)_{h} v).
\end{align}
When this property holds, it induces a left-invariant distance
\begin{equation}
\label{eq:left-inv-distance}
d_\cG (p, h) = d_\cG ( L_g(p), L_g(h) ), \quad \forall g, p, h \in G.
\end{equation}
Left-invariant metrics play a key role because they allow for designing equivariant image processing operations.

\subsection{Optimal Transport}
\label{sec:OT}
Next, we introduce optimal transport problems posed on a Lie group $G$ with a left-invariant Riemannian metric $d$. We define the most classical problems such as the one of the Wasserstein distance, barycenters and gradient flows, and we discuss certain properties. Other common problems such as Gromov-Wasserstein discrepancies and barycenters could be introduced but we omit them in our presentation for the sake of brevity. 

Let $\cM(G)$ be the space of finite signed measures on $G$. Its positive cone and the set of probability measures are defined as
$$
\cM(G)_+ \coloneqq \{ \mu \in \cM(G)\cond \mu\geq 0 \}, \;\; \cP(G)\coloneqq \{ \mu \in \cM_+(G) \cond \mu(G)=1 \}.
$$
The OT problems that we consider in this work can be expressed as follows. Let $n\in \bN$, and let $\cL : \cP(\cX) \rightarrow \R$ be a loss function with $\cX = G^n = G \times \cdots \times G$. Given $K$ measures $\mu_1, \dots, \mu_K \in \cP(G)$ with $1\leq K\leq n$, the task is to solve
\begin{equation}
\label{eq:MK-multimarg}
    \rho \coloneqq \inf_{\pi \in \Pi} \cL(\pi), 
\end{equation}
where
\begin{equation}
\label{eq:couplings}
    \Pi = \Pi(\cX;\mu_1,\dots, \mu_K) 
    \coloneqq
    \{ \pi \in \cP(\cX) \cond  (\mathrm{proj}_i)_\# \pi  = \mu_i,\quad 1\leq i \leq K \},
\end{equation}
is the set of coupling measures having the collection of $\mu_i$'s as marginals. In the above formula, $\mathrm{proj}_i: \cX \to G$ denotes the projection on the $i$-th component, and the push-forward measure $(\mathrm{proj}_i)_\# \pi$ is the $i$-th marginal of $\pi$. For a measurable map $T: G \rightarrow G$, and a measure $\mu \in \cP(G)$ the push-forward measure $T_\# \mu \in \cP(G)$ is defined by the relation 
\[
    T_\# \mu (A) = \mu(T^{-1}(A))   \text{ for all measurable } A 
    \subseteq G,
\]
where $T^{-1}(A)$ denotes the pre-image of $A$ under $T$. One important example of this will be the push-forward action of left-translations,
\begin{equation}
\label{equ:measure_group_action}
    (L_g)_\# : \cP(G) \rightarrow \cP(G), \textrm{ given by } \  ((L_g)_\# \mu)(A)= \mu(L_g^{-1} A),
\end{equation}
which is a group action of $G$ on $\cP(G)$. The following classical OT problems can be written in the above general form:
\begin{itemize}
\item \textbf{Wasserstein distances:} For $p\geq 1$, the Wasserstein space $\cP_p(G)$ is defined as the set of probability measures $\mu\in \cP(G)$ with finite moments up to order $p$, namely
$$
\cP_p(G) \coloneqq \biggl\{ \mu \in \cP(G) \cond \int_G d^p(x, x_0) \,\rd\mu(x) \;< +\infty \biggr\},
$$
where $x_0\in G$ is a (arbitrary) reference point in $G$ and where we write $d^p(x,y) := (d(x,y))^p$ . Let $\mu$ and $\nu$ be two probability measures in $\cP_p(G)$. The $p$-Wasserstein distance $W_p(\mu,\nu)$ between $\mu$ and $\nu$ is defined by
\begin{equation}
\label{eq:Wp}
W_p(\mu,\nu) \coloneqq \left(\mathop{\inf}_{\pi \in \Pi(G \times G;\mu,\nu)} \int_{G \times G} d^p(x,y) \,\rd\pi(x, y)\right)^{\frac{1}{p}}. 
\end{equation}
The space $\cP_p(G)$ endowed with the distance $W_p$ is a metric space, usually called $L^p(G)$-Wasserstein space (see \cite{villani2003topics} for more details). The $W_p$ distance  \cref{eq:Wp} is an optimal transport problem of the form \cref{eq:MK-multimarg} with $n=K=2$, and a l.s.c.~cost function $c(x,y)= d^p(x, y)$ giving rise to
\begin{align}
\label{eq:LWp}
\cL^{W_p}(\pi) \coloneqq \int_{G \times G} d^p(x, y) \,\rd\pi(x, y).
\end{align}
Optimizers $\pi^*$ of \cref{eq:Wp} are called \textit{optimal transport plans}. Under certain \emph{conditions} on the marginals, see \cite[Corollary 9.3]{villani2008optimal}, one can guarantee the existence of an \textit{optimal map} $T^*: G \to G$ such that $\pi^* = (Id \times T^*)_\# \mu$ \footnote{\((Id \times T^*)(g) = (g,T^*(g))\).}, which implies that $T^*_\# \mu = \nu$.
\item \textbf{Barycenters:} To approximate or interpolate measures, one often resorts to \textit{Wasserstein barycenters}. Let $n \in \bN^*$ and let
$$
\Sigma_n \coloneqq \Big\{ \lambda \in \bR^n\cond \lambda_i\geq 0,\, \sum_{i=1}^n \lambda_i = 1 \Big\}
$$
be the simplex in $\bR^n$. We say that $\bary(\rY_n, \Lambda_n) \in \cP_p(G)$ is a barycenter associated to a given set $\rY_n = (\mu_i)_{1\leq i\leq n}$ of $n$ probability measures from $\cP_p(G)$ and to a given set of weights $\Lambda_n = (\lambda_i)_{1\leq i\leq n} \in \Sigma_n$, if and only if $\bary(\Lambda_n, \rY_n)$ is a solution to
\begin{equation}
\label{eq:barywass}
\bary(\rY_n, \Lambda_n) \in \argmin_{\nu \in \cP_p(G)} \sum_{i=1}^n \lambda_i W_p^p(\nu,\mu_i).
\end{equation}
Problem \cref{eq:barywass} can be written as an optimization problem of the form \cref{eq:MK-multimarg} by setting $\cX = G^{n+1}$, $\Pi = \Pi(\cX; \mu_1, \cdots, \mu_n)$ and defining the cost for $g=(g_1,\dots, g_{n+1}) \in \cX$ as
\begin{equation}
\label{equ:cost_bary}
    c(g_1,\dots, g_{n+1}) = \sum_{i=1}^n \lambda_i d^p(g_i, g_{n+1}),
\end{equation}
which yields a loss function of the form
\begin{align}
\label{equ:cost_barycenter}
\cL^{\bary}(\pi) = \int_\cX c \, \rd \pi
&= \int_{\cX} \sum_{i=1}^n \lambda_i d^p(g_i, g_{n+1}) \rd \pi(g_1,\dots, g_{n+1}) \\
&=  \sum_{i=1}^n \lambda_i \cL^{W_p}\big((\mathrm{proj}_{i, n+1})_\# \pi\big), \quad \forall \pi \in \cP_p(\cX)
\end{align}
where $(\mathrm{proj}_{i, j}) : \cX \rightarrow G \times G$ denotes the projection on the i'th and j'th marginal. Note that $(\mathrm{proj}_{i, n+1})_\# \pi \in \cP(G\times G)$ is a coupling matrix between $\mu_i$ and $(\mathrm{proj}_{n+1})_\# \pi$. In this way one retrieves \cref{eq:barywass}, with $\bary(Y_n, \Lambda_n) = (\mathrm{proj}_{n+1})_\# \pi$. The case $n=2$ deserves some special attention. Let $\Lambda_t = (t, 1-t)$ for $0\leq t\leq 1$, then we define the \textit{interpolation operator} $\Phi_t: \cP_p(G) \times \cP_p(G) \rightarrow \cP_p(G)$ as follows
\begin{equation}
\label{equ:interpolation_operator}
    \Phi_t(\mu, \nu) := \bary\big( (\mu, \nu), \Lambda_t \big).
\end{equation}
This operator is also called the \textit{displacement interpolation} or \textit{McCanns interpolation} between $\mu$ and $\nu$.
\item \textbf{Gradient flows:} Given an initial condition $\mu_0 \in \cP(G)$, an energy functional $F: \cP(G) \rightarrow \R$ and a timestep $\tau > 0$, a discrete gradient flow evolution is defined as the time-marching
\begin{equation}
\label{equ:GF_JKO}    
    \mu_{k+1} \in \argmin_{\nu \in \cP(G)} W_2^2(\mu_k, \nu) + \tau F(\nu), \quad \forall k\in \bN.
\end{equation}
A single iteration is also referred to as a JKO step, see \cite{jordan1996variational}, and can be written in the form of \cref{eq:MK-multimarg} by taking 
\begin{equation}
\label{equ:cost_GF}    
    \cL^{GF}(\pi) = \cL^{W_p}(\pi) + F((\mathrm{proj}_2)_\# \pi),
\end{equation}
and $\Pi = \Pi(\cX ; \mu_k)$. The second marginal of $\pi$ plays the role of $\nu$ in \cref{equ:GF_JKO}.
\end{itemize}

\subsection{The Special Euclidean group}
Our Lie group of interest is the three-dimensional \textit{Special Euclidean group} $\text{SE}(2)$ of all rigid body motions (or roto-translations) of $\R^2$. It is defined as the semi-direct product of $\R^2$ and $\text{SO}(2)$,  $\text{SE}(2) = \R^2 \rtimes \text{SO}(2)$. We write $g=(x, R) \in \text{SE}(2)$, with $x\in \R^2,\, R \in \text{SO}(2)$, and define group multiplication
\begin{equation}
    g_1 g_2 \coloneqq (x_1 + R_1 x_2, R_1 R_2), \quad \forall g_1=(x_1, R_1), g_2=(x_2, R_2) \in \SE.
\end{equation}
and with inversion given by $g^{-1} = (- R^{-1}x, R^{-1})$.
The group $\text{SE}(2)$ acts naturally on $\R^2$: 
\begin{equation}
    \label{equ:group_action_SE2}
        g \act y = (x, R) \act y = R y + x, \textrm{ for all } y \in \R^2, \, g \in \text{SE}(2).
\end{equation}
Here we have used $\act$ as a shorthand notation for the group action. We can identify any element $g \in \SE$ with $(x,y,\theta) \in \R^2 \times \R / (2\pi \Z)$, where we use the small-angle convention that $\R / (2\pi \Z) = [-\pi, \pi)$. We call these coordinates the \textit{fixed coordinate system} on $\SE$. Then for $(x,y,\theta), (x',y',\theta') \in \SE$ the group product becomes
\begin{equation*}
(x, y, \theta) (x', y', \theta') = ( x + x' \cos \theta - y' \sin \theta \, , \, y + x' \sin \theta + y' \cos \theta \, , \, \theta + \theta' \mod 2\pi ),
\end{equation*}
and the identity element $e=(0,0,0)$.
Given the basis $\{A_1, A_2, A_3\} = \{ \partial_x |_{e}, \partial_y|_{e}, \partial_\theta|_{e} \}$ for the Lie algebra $T_e(\SE)$, we get the so called left-invariant vector fields $\mathcal{A}_i |_g = ({\rm d}L_g)_e A_i$ via the pushforward of left translations:
\begin{align*}
    (\mathcal{A}_1)|_{g} &= +\cos(\theta) \left. \partial_x \right|_{g} + \sin(\theta) \left.\partial_y\right|_{g} \\
    (\mathcal{A}_2)|_{g} &= -\sin(\theta) \left. \partial_x \right|_{g} + \cos(\theta) \left. \partial_y \right|_{g} \\
    (\mathcal{A}_3)|_{g} &= \left. \partial_{\theta} \right|_{g}
\end{align*}
for all $g=(x, y, \theta) \in \SE$. The left-invariant Riemannian distances on $\SE$ that we consider\footnote{One could extend to data-driven left-invariant metric tensor fields like in \cite{vandenbergJMIV2023}, but this is beyond the scope of this article.} are induced by left-invariant metric tensor fields characterized by
\begin{equation}
\label{eq:metric_SE2}
    \mathcal{G}_g(\cA_i|_{g},\cA_j|_{g}) = \begin{bmatrix}
        w_1^2 & 0 & 0\\
        0 & w_2^2 & 0\\
        0 & 0 & w_3^2
    \end{bmatrix} 
\end{equation}
where $\mc{i} > 0$ are constant. We also define the \textit{spatial anisotropy} $\zeta$ of the left-invariant metric tensor field $\cG$ as 
\begin{equation}    
\label{equ:spatial_anisotropy}
    \zeta \coloneqq  \frac{\max(\mc{1}, \mc{2})}{\min(\mc{1}, \mc{2})} \in [1,\infty).
\end{equation}
We say the metric is (spatially) isotropic if \(\zeta=1\), and anisotropic otherwise.
Lastly we need the Haar measure on $\SE$, which we will denote by $\haar$. Up to a scalar constant this measure coincides with the Lebesgue measure on $\R^2 \times (\R/(2\pi \mathbb{Z})) \equiv \R^2 \times [-\pi, \pi)$, given by $\dhaar{g} = \rd x\rd y \rd\theta$.

\subsubsection{Lifting and Projecting Images to and from Orientation Scores}
\label{sec:lifting}
In this paper, we focus on numerical examples that lift images $f \in L^2(\R^2)$ to \textit{orientation scores} $U \in L^2(\SE)$. 
We rely on the so-called \textit{orientation score transform} defined as the mapping
\begin{align}
\cW_\psi: L^2(\bR^2) &\to L^2(\SE) \nonumber\\
f&\mapsto U(g)=(\cW_\psi f)(g) \coloneqq \int_{\R^2} \psi( g^{-1} \act x)\; f(x)\ \rd x,\quad \forall g \in \SE,
\label{eq:lift}
\end{align}
where $\psi \in (L^1 \cap L^2)(\R^2)$ is a chosen wavelet function. \Cref{fig:lifting_intro} gives an illustration of the transform. Its usefulness depends on the chosen wavelet $\psi$. It is worth mentioning that the orientation score transform is the only way to lift linearly onto the group $\SE$ \cite[Thm.1]{bekkersbspline2019}. 

In this paper, we choose to employ \textit{cake-wavelets} (for details, we refer to \cite[ch:4.6.1]{duits2005perceptual}). 
This choice results in a stable and interpretable lifting as can be seen in \cref{fig:lifting_intro}. Typically, no information is lost during the lifting from a function to its orientation score \cite{duits2005perceptual}. One can reconstruct disk limited functions from the orientation score by a \textit{projection} which we define as:
\begin{align}
\rP: L^2(\SE) &\to L^2(\bR^2) \nonumber\\
U &\mapsto \rP(U)\coloneqq \int_{-\pi}^{\pi} U(\cdot,\theta)\; {\rm d}\theta. \label{eq:proj}
\end{align}
The choice of the wavelet function $\psi$ (in particular cake-wavelets) directly influences the stability of the invertible orientation score transform $\mathcal{W}_{\psi}$ and enables a meaningful projection \ref{eq:proj} for the reconstruction of disk limited functions. We refer to \cite{duits2007image, duits2010left} for a formal derivation of these properties and \cite{remcolecture} for a comprehensive explanation. We provide a concise summary of these results in Appendix~\ref{app:reconstruct}. In \cref{fig:intro} the lifting operator (orientation score transform) is depicted by a yellow arrow, whereas the projection is indicated by an orange arrow.  

Other possible lifting mechanisms include channel representations \cite{forssen}, tensor voting \cite{medioni}, or Gabor filters \cite{galeotti2022cortically}. Such mechanisms are highly valuable if one wants to process orientation fields $(x,y) \mapsto v(x,y) \in [0, 2\pi)$ (on images) \cite{medioni} or robust orientation estimates \cite{forssen,Felsberg2} rather than the scalar-valued images/densities $f$ themselves. We use one such lifting method for optimal transport over orientation fields in \cref{subsec:lift_of}.

\subsubsection{The Importance of Equivariant Processing} \label{sec:lifting_equivariance_relation}
We highlight an important connection between the lifting of images and the equivariant processing of orientation scores. 
We start by defining the following two natural group actions of $G=\SE$, \(\mathcal{U}_g : L^{2}(\R^2) \to L^{2}(\R^2)\) and \(\mathcal{L}_g : L^{2}(\SE) \to L^{2}(\SE) \):
\begin{equation}
    (\mathcal{U}_g f)(x) := f(g^{-1}\act x), \quad \ 
    (\mathcal{L}_g U)(h) := U(g^{-1}h).
\end{equation}
Using \(\mathcal{U}_{g}\) we can succinctly rewrite the orientation score transform as $(\mathcal{W}_{\psi}f)(g)=(\mathcal{U}_{g}\psi,f)$, where \((\cdot,\cdot)\) is the inner product on \({L^{2}(\R^2)}\).
From the identity \((\mathcal{U}_g f_1, \mathcal{U}_g f_2) = (f_1, f_2)\) 
and the fact that $\mathcal{U}$ is a group representation
one directly deduces:
\begin{equation} \label{eq:fundID}
    \mathcal{W}_{\psi}
    \circ \mathcal{U}_{g} =\mathcal{L}_{g} \circ \mathcal{W}_{\psi} \textrm{  for all }g \in G,
\end{equation}
which means that lifting a roto-translated image is equivalent to left-translating the lift of the original image.

Consider an effective operator \(\Upsilon:L^2(\bR^2)\to L^2(\bR^2)\) on the image domain that lifts images \(f\in L^2(\bR^2)\), does some processing on the orientation score via an operator \(\Phi:L^2(\SE)\to (L^1 \cap L^2)(\SE) \), and projects back:
\begin{equation}
    \Upsilon(f) \coloneqq (\rP\circ \Phi\circ \cW_\psi)(f) =\int_{-\pi}^\pi (\Phi \circ \mathcal{W}_{\psi}f)(\cdot,\theta)\, {\rm d}\theta.
\end{equation}
It is desirable for such an operator \(\Upsilon\) to be \textit{roto-translation equivariant}, that is
\begin{equation}
    \Upsilon \circ \mathcal{U}_{g} = \mathcal{U}_{g} \circ \Upsilon \textrm{ for all } g \in G.
\end{equation}
This holds if the operator $\Phi$ on the orientation score is \textit{left-equivariant}: 
\begin{equation} \label{LIdef}
\Phi \circ \mathcal{L}_g =\mathcal{L}_{g} \circ \Phi \textrm{ for all } g \in G.
\end{equation}
One can quickly show this result using \cref{eq:fundID} (see \cite[Thm.21]{duits2005perceptual} for further details). 

In the following, our approach is to work with probability measures instead of functions at the level of $\SE$. 
We thus build lifting operators $\widetilde \cW_\psi: L^2(\bR^2) \to \cP(\SE)$, work with operators $\widetilde\Phi:\cP(\SE)\to \cP(\SE)$ inspired from OT for the processing, and project back via operators of the form $\widetilde \rP:\cP(\SE)\to L^2(\bR^2)$. 
In a similar manner as for the approach with functions, if we want to guarantee roto-translation equivariance, we need to work with OT operators that are equivariant w.r.t. the group action defined in \cref{equ:measure_group_action}. 
For now we define
\begin{equation}
\label{eq:tildeW-v1}
(
\widetilde{\mathcal{W}}_{\psi}(f))(A) :=
\frac{\int_{A}
(\mathcal{W}_{\psi}f(q))_+ \dhaar{q}}{\int_{G}
(\mathcal{W}_{\psi}f(q))_+ \dhaar{q}}\geq 0
\end{equation}
with $a_+=\max\{a,0\}$ 
for all measurable sets $A\subset \SE$.
Now from \cref{eq:fundID} and left-invariance of the Haar measure it directly follows that the effective operator $\widetilde{\Upsilon}(f) \coloneqq (\rP\circ \tilde{\Phi}\circ\widetilde{\cW}_\psi)(f) =\int_{-\pi}^\pi (\tilde{\Phi} \circ \widetilde{\mathcal{W}}_{\psi}f)(\cdot,\theta)\, {\rm d}\theta$, now with $\tilde{\Phi}:\mathcal{P}(\SE) \to \mathcal{P}(\SE)$, commutes with roto-translations if
\[
(L_{g})_\# \circ \tilde{\Phi} = 
\tilde{\Phi} \circ (L_{g})_\# \textrm{ for all }g \in \SE.
\]
Note that the operators $\mathcal{L}_g : L^{2}(\SE) \to L^{2}(\SE)$ and $(L_g)_\# : \cP(\SE) \rightarrow \cP(\SE)$ are very similar. Both ``roto-translate" their input with the group element $g$. The connection becomes more apparent when we consider a measure $\mu \in \cP(\SE)$ with a density $f_\mu$ with respect to the Haar measure on $\SE$. Then the density of $(L_g)_\# \mu$ is the function $\mathcal{L}_g (f_\mu)$.
\begin{remark}
Later in \cref{sec:numerical_results}, the lifting $\widetilde \cW_\psi$ will map to two probability measures on $\SE$: one will capture the positive components of $\cW_\psi$ (like in \eqref{eq:tildeW-v1}), and the other the negative components, as also done in \cite{galeotti2022cortically}. We will treat each component with the same optimal transport parameter settings, and add them afterwards. Working with both components will be crucial to keep contrast and sharp edges as in \cref{fig:intro}.
\end{remark}

\section{Symmetries and Theoretical Properties}
\label{sec:symm_ot}
Consider the setting of \cref{fig:intro} where we are interpolating between two images \(f_0, f_1 \in L^2(\bR^2)\) by lifting them to measures \(U_0, U_1 \in \cP(\SE)\), performing optimal transport there, and projecting back to \(L^2(\bR)\). Similar to the discussions in \cref{sec:lifting_equivariance_relation}, for the process 
\[
(f_0,f_1) \mapsto f_t:= 
P \circ 
\Phi_t(U_0,U_1), \textrm{ with }
U_i= (\widetilde\cW_{\psi}f_i),\; t \in [0,1],\;i=0,1.
\]
to be roto-translation equivariant we need the Wasserstein barycenter interpolation $\Phi_t$ on \(\SE\), recall \cref{equ:interpolation_operator}, to be left-equivariant: 
\begin{equation}
\label{requirement}
\Phi_t( (L_g)_\# U_0, (L_g)_\# U_1)=
(L_g)_\# \Phi_t(U_0,U_1)
\textrm{ for all }g \in G, t\in [0,1].
\end{equation}
This motivates looking at the general invariance/equivariance properties of the introduced optimal transport problems.

\subsection{Invariance and Equivariance of Optimal Transport}
\label{sec:inv_equi}
Next, we formulate the desired symmetries of each of the introduced optimal transport problems and prove that they are satisfied.
\begin{lemma}
\label{lem:equivariance}
Let $G$ be a Lie group with left-invariant metric $d$, then:
\begin{itemize}[itemindent=0.5cm]
\item The Wasserstein distance is invariant, i.e. for all measures $\mu, \nu \in \mathcal{P}(G)$ we have
\[
 W_p((L_g)_\#\mu,(L_g)_\#\nu) = W_p(\mu,\nu).
\]
\item Wasserstein barycenters are equivariant, for all $\mu_1, \dots, \mu_N \in \cP(G)$ and all $\Lambda_N \in \Sigma_N$ we have
$$
\bary((L_g)_\# \rY_N, \Lambda_N) = (L_g)_\# \bary(\rY_N, \Lambda_N),
$$
where $(L_g)_\#\rY_N = ((L_g)_\# \mu_1, \dots, (L_g)_\#\mu_N)$.
\item If the gradient flow $\cL^{GF}$ involves an invariant energy function $F: \cP(G) \rightarrow \R$, then $\cL^{GF}$ is invariant, and the whole evolution \cref{equ:GF_JKO}, that is $(\mu_n)_{n\in \N}$, behaves equivariant.
\end{itemize}
\end{lemma}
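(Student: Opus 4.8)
The plan is to prove all three statements by a single mechanism: the push-forward by $(L_g)_\#$ sets up a bijection between the relevant sets of coupling measures that preserves the respective cost functionals, and this is ultimately a consequence of the left-invariance of the distance $d$ (equation \cref{eq:left-inv-distance}) together with the functorial properties of push-forward. First I would record the elementary facts that $(L_g)_\#$ is a bijection on $\cP(G)$ with inverse $(L_{g^{-1}})_\#$, that push-forward commutes with composition and with projections (so that $\mathrm{proj}_i \circ (L_g \times \cdots \times L_g) = L_g \circ \mathrm{proj}_i$ implies $(\mathrm{proj}_i)_\# (L_g\times\cdots\times L_g)_\# \pi = (L_g)_\# (\mathrm{proj}_i)_\# \pi$), and the change-of-variables formula $\int_\cX h\, \rd (T_\# \pi) = \int_\cX (h\circ T)\, \rd\pi$.

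For the Wasserstein distance, I would show that $\pi \mapsto (L_g \times L_g)_\# \pi$ is a bijection from $\Pi(G\times G;\mu,\nu)$ onto $\Pi(G\times G;(L_g)_\#\mu,(L_g)_\#\nu)$ using the marginal-commutation fact above, and then compute
\begin{equation*}
\int_{G\times G} d^p(x,y)\, \rd\big((L_g\times L_g)_\#\pi\big)(x,y) = \int_{G\times G} d^p(L_g x, L_g y)\, \rd\pi(x,y) = \int_{G\times G} d^p(x,y)\,\rd\pi(x,y),
\end{equation*}
where the last equality is exactly left-invariance of $d$. Taking infima over the two (bijective) sets of couplings gives $W_p((L_g)_\#\mu,(L_g)_\#\nu)=W_p(\mu,\nu)$; this is the invariance claim, and it is the workhorse for the other two. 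For the barycenter statement I would argue that $\nu \mapsto (L_g)_\#\nu$ is a bijection of $\cP_p(G)$ under which the objective in \cref{eq:barywass} is invariant term by term: $W_p^p((L_g)_\#\nu,(L_g)_\#\mu_i) = W_p^p(\nu,\mu_i)$ by the first bullet, so $\sum_i \lambda_i W_p^p((L_g)_\#\nu,\mu_i) = \sum_i \lambda_i W_p^p(\nu, (L_{g^{-1}})_\#\mu_i)$; hence $\nu^*$ minimizes the objective for data $\rY_N$ iff $(L_g)_\#\nu^*$ minimizes it for data $(L_g)_\#\rY_N$, which is the equivariance of $\bary$. (One should note the set-valued nature of $\argmin$: the statement is really an equality of solution sets, or a statement about selections, and I would phrase it accordingly.) For the gradient flow, invariance of $\cL^{GF}$ follows by adding the invariance of $\cL^{W_p}$ (from the first bullet, transported through the coupling bijection) to the assumed invariance $F((L_g)_\#\eta)=F(\eta)$; then equivariance of the evolution follows by induction on $k$: if $\mu_k \mapsto (L_g)_\#\mu_k$, then the JKO functional $\nu \mapsto W_2^2(\mu_k,\nu)+\tau F(\nu)$ for the translated initial datum equals the original functional precomposed with $(L_{g^{-1}})_\#$, so its minimizer is $(L_g)_\#\mu_{k+1}$.

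The main obstacle is not any single estimate but the bookkeeping around existence and the set-valued minimizers: the argument as sketched is clean when minimizers exist and are unique, but barycenters and JKO steps need not be unique in general, so the honest statement is an equality of $\argmin$ sets (equivalently, $(L_g)_\#$ conjugates one optimization problem into the other), and I would make sure the lemma and its proof are consistent on this point. A secondary technical point to check carefully is that $(L_g)_\#$ preserves $\cP_p(G)$ (finiteness of the $p$-th moment), which again uses left-invariance of $d$ and the freedom to choose the reference point $x_0$; and that measurability/lower-semicontinuity hypotheses needed to invoke the coupling formalism are inherited under the smooth bijection $L_g$. None of these are deep, but they are where a careless proof would have a gap.
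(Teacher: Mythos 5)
Your proposal is correct and follows essentially the same route as the paper's proof: the push-forward bijection between the coupling sets combined with left-invariance of $d^p$ yields invariance of the optimal value and equivariance of the minimizers, which the paper states for the general loss \cref{eq:MK-multimarg} and then verifies for $\cL^{W_p}$ exactly as you do, with the barycenter and gradient-flow cases handled "along similar lines" just as in your sketch. Your additional remarks on the set-valued $\argmin$ and on $(L_g)_\#$ preserving $\cP_p(G)$ make explicit some bookkeeping the paper leaves implicit, but they do not change the argument.
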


\begin{proof}
The proof relies on the fact that there is a bijection between $\Pi(\cX; \{\mu_i\}_{i=1}^K\}$ and $(L_g)_\# \Pi \coloneqq \Pi(\cX; \{(L_g)_\#\mu_i\}_{i=1}^K\})$. Consequently, if the loss function $\cL$ from the general OT problem \cref{eq:MK-multimarg} is invariant, then we have invariance in the value of minimizers, and equivariance of the arg-minimizers. Therefore, it suffices to check that $\cL^{W_p},$ $\cL^{\bary}$ and $\cL^{GF}$ are invariant to prove our claim. For $\cL^{W_p}$, let $(L_g)_\# \pi \in (L_g)_\# \Pi$. Then we easily see from left-invariance of the ground distance that 
\[
    \cL^{W_p}\big((L_g)_\# \pi \big) = \int_{G \times G} d^p(L_g(x), L_g(y)) \,\rd\pi(x, y) = \int_{G \times G} d^p(x, y) \,\rd\pi(x, y) = \cL^{W_p}\big(\pi \big),
\]
so that indeed the cost function for the Wasserstein distance is invariant. The other statements are easily derived following similar lines.
\end{proof}

We finish this section with a couple remarks:
\begin{itemize}
\item Many energy functionals $F$ for gradient flows satisfy the invariance assumption of \cref{lem:equivariance}. As an example, we may mention the class of ``inner energy functionals'' which take the form
\begin{equation}
F(\mu) = \begin{cases}
\int_G f \left( \frac{\rd \mu}{\rd \haar} \right) \rd \haar &\quad \text{ if } \mu \ll \haar \\
\infty &\quad \text{ else},
\end{cases}
\end{equation}
where $f: G \rightarrow \R$, $\ll$ denotes absolute continuity of measures and $\frac{\rd\mu}{\rd \haar}$ denotes the Radon-Nikodym derivative of $\mu$ w.r.t.~the Haar measure on $G$. Among the PDEs that they give rise to is for example the (non-linear) heat equation \cite{villani2008optimal}.
\item With minimal technical additions, the results of \cref{lem:equivariance} can also be carried out in the context of \textit{homogeneous spaces} equipped with a left-invariant distance. As a matter of fact, we just need left-invariance of the cost function $c$ to guarantee equivariance of the whole process in general. 
\item We remark that the conclusion of \cref{lem:equivariance} also applies to \emph{isotropic} metrics on $\R^2$, i.e. optimal transport on $(\R^2, |\cdot|^p)$ is also invariant/equivariant to roto-translations. However, the unique aspect of this property for $\SE$ is that unlike $\R^2$, optimal transport with anisotropic metrics in $\SE$ is \emph{also} left-invariant. \emph{Anisotropic} (or even sub-Riemannian) metrics in $\SE$ relate to association fields \cite{petitot2003neurogeometry,citti2006cortical,galeotti2022cortically,bellaard2022analysis} from neurogeometry, and have a line completion behavior which is useful in many practical applications. This feature (important for contour-propagation and contour-perception) is unique to $\SE$ and cannot be achieved with left-invariant metrics on $\R^2$.    
\end{itemize}

\subsection{Optimality of Right-Translated Measures}
\label{sec:OT_lie_groups}
Here we study whether right-translations can be optimal transport maps of the Wasserstein metric, recall \cref{eq:Wp}. Note that on a non-commutative group a fixed left-translation in general does not commute with left-translations, in contrast to right-translations, that do commute with all left-translations. The concrete question is:\vspace{0.1cm}
\begin{center}
\textit{For a given $\mu\in \cP_p(G)$, let $\pi^*$ be the the optimal transport plan of $W_p(\mu, (R_g)_\#\mu)$.\\ Is $\pi^*$ of the form $\pi^*=(\Id \times R_g)_\# \mu$? In other words, is the transport map $T=R_g$ optimal?}
\end{center}
\vspace{0.1cm}
We know there are a few cases in which this intuition holds, and we show that this result is not true in general for a wide class of Lie groups including $SE(2)$.

On the positive side, the result is true for $G=\R^n$ (see for example Remark 2.19 in \cite{peyre2020computational}). It is also true for certain right actions (right actions w.r.t. a specific sub-group) of $G=H(3)$ equipped with a specific sub-Riemannian structure as shown in \cite{ambrosio2004optimal}. However, we illustrate that this is not true in general by first building a counterexample in $SO(2)$.

\begin{lemma}
\label{prop:s1_example}
    Consider the Lie group $SO(2)$, identified with $\mathbb{R}/(2\pi\mathbb{Z}) \equiv (0, 2\pi]$ together with its left (or right) invariant metric $d$.
    Then we can find a measure $\mu$ and a group element $g$ such that $R_g$ (or $L_g$) is not the optimal transport map from $\mu$ to $(R_g)_\# \mu$ (or $(L_g)_\# \mu$) for the cost $d^p$ with $p \geq 1$.
\end{lemma}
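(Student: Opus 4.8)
The plan is to produce an explicit finitely-supported measure on which the rotation map is strictly beaten by a ``swap'' coupling, so that it cannot be the optimal transport map. Write $G=SO(2)$ and identify it with $\R/(2\pi\Z)$; since $G$ is abelian we have $L_g=R_g$, and the left-invariant distance is the bi-invariant arc-length distance $d(\theta,\phi)=\min_{k\in\Z}|\theta-\phi-2\pi k|\in[0,\pi]$, so it suffices to treat right translations. First I would fix the two-atom antipodal measure $\mu=\tfrac{1}{2}(\delta_{0}+\delta_{\pi})\in\cP_p(G)$ and let $g$ be the rotation by an angle $\alpha\in(\pi/2,\pi)$, so that $(R_g)_\#\mu=\tfrac{1}{2}(\delta_{\alpha}+\delta_{\alpha+\pi})$ with angles read mod $2\pi$.

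Next I would compare two couplings of $\mu$ with $(R_g)_\#\mu$. The map $T=R_g$ moves each atom by $d(0,\alpha)=d(\pi,\alpha+\pi)=\alpha$, so the induced plan $(\Id\times R_g)_\#\mu$ has cost $\int_{G}d^{p}(\theta,T\theta)\,\rd\mu(\theta)=\alpha^{p}$. On the other hand, the coupling $\widetilde\pi$ transporting $0\mapsto\alpha+\pi$ and $\pi\mapsto\alpha$ (each with mass $\tfrac{1}{2}$) has marginals $\mu$ and $(R_g)_\#\mu$, and since $\alpha\in(\pi/2,\pi)$ one checks $d(0,\alpha+\pi)=2\pi-(\alpha+\pi)=\pi-\alpha$ and $d(\pi,\alpha)=\pi-\alpha$, so its cost is $(\pi-\alpha)^{p}$. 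Because $\alpha>\pi/2$ forces $0<\pi-\alpha<\alpha$, and $x\mapsto x^{p}$ is strictly increasing for $p\ge 1$, we get $W_p^p(\mu,(R_g)_\#\mu)\le(\pi-\alpha)^{p}<\alpha^{p}$, hence $(\Id\times R_g)_\#\mu$ is not optimal: $R_g=L_g$ is not an optimal transport map from $\mu$ to $(R_g)_\#\mu$.

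I do not expect a genuine obstacle here; the only point needing care is the wrap-around bookkeeping on the circle — the competitor simply exploits that going ``backwards the short way'' by $\pi-\alpha$ is cheaper than going ``forwards'' by $\alpha$ once $\alpha>\pi/2$ — together with checking that $\widetilde\pi$ is an admissible coupling. An even quicker alternative is to take $\mu$ to be the normalized Haar (uniform) measure on $G$: then $(R_g)_\#\mu=\mu$ and $W_p(\mu,\mu)=0$, while $R_g$ has positive cost $d(e,g)^{p}>0$ for $g\ne e$. I would nonetheless keep the two-atom example as the primary one, because the swap mechanism is precisely what is needed when upgrading this to a counterexample on $SE(2)$.
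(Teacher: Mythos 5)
Your proof is correct and follows essentially the same route as the paper: a two-atom measure on $SO(2)$ for which an explicit swap coupling is strictly cheaper than the plan induced by the translation, so $R_g$ cannot be optimal. The only differences are cosmetic — the paper fixes $g=\pi$ and places the atoms at $\epsilon$ and $2\pi-\epsilon$, whereas you fix antipodal atoms and vary $g=\alpha\in(\pi/2,\pi)$; note that your $\mu$ is $R_\pi$-invariant, so your cheaper competitor is itself the translation $R_{\alpha+\pi}$ (a symmetry the paper's example deliberately avoids), but this does not affect the validity of the counterexample for the lemma as stated.
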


\begin{proof}
 As the group $SO(2)$ is commutative left translations and right translations coincide. Let $0 < \epsilon \leq \pi/2$ be some positive constant and consider $\mu = \frac{1}{2} \delta_\epsilon + \frac{1}{2} \delta_{2\pi - \epsilon}$. Let $g = \pi$. We then have that $(R_\pi)_\# \mu = \frac{1}{2} \delta_{\pi - \epsilon} + \frac{1}{2} \delta_{\pi + \epsilon}$.

    It is clear that the cost associated to the transport map $R_\pi$ for the Monge formulation is $\pi^p$. However if we send the Dirac mass at $\epsilon$ to the one at $\pi - \epsilon$ and the one at $2\pi - \epsilon$ to the one at $\pi + \epsilon$, we get a cost strictly less than $\pi$, namely $(\pi - 2 \epsilon)^p$. Hence we conclude that left translations cannot be optimal. We can apply the same reasoning to the right translation case.
\end{proof}

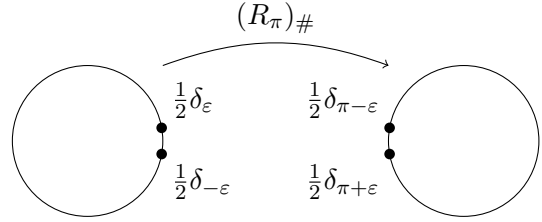
\begin{wrapfigure}{r}{0.5\textwidth}
\centering
\begin{tikzpicture}
  \coordinate (center1) at (0,0);
  \coordinate (center2) at (5,0);

  \draw (center1) circle [radius=1cm];

  \fill ($(center1)+(10:1cm)$) circle [radius=2pt] node[above right] {$\tfrac{1}{2}\delta_\varepsilon$};
  \fill ($(center1)+(-10:1cm)$) circle [radius=2pt] node[below right] {$\tfrac{1}{2}\delta_{-\varepsilon}$};

  \draw[->] (1,1) to [bend left=20] node[midway, above] {$(R_{\pi})_\#$} (4,1);
    
  \draw (center2) circle [radius=1cm];

  \fill ($(center2)+(170:1cm)$) circle [radius=2pt] node[above left] {$\tfrac{1}{2}\delta_{\pi-\varepsilon}$};
  \fill ($(center2)+(190:1cm)$) circle [radius=2pt] node[below left] {$\tfrac{1}{2}\delta_{\pi+\varepsilon}$};

\end{tikzpicture}
\caption{An example of measure on $SO(2)$ and its right translation for which the optimal transport plan between them is \underline{not} a right translation \cref{prop:s1_example}} 
\label{fig:prop:s1_example}
\end{wrapfigure}

Note that we could have built a similar example for functions with densities that are compactly supported on a sufficiently small interval. Also note that we could have taken a measure $\mu$ that was invariant under certain group actions, such as the uniform measure over $SO(2)$, but the example above has no such symmetries. We can leverage \cref{prop:s1_example} to construct counter-examples for a much larger class of Lie groups.

\begin{theorem}
\label{prop:compact_subgroup}
    Let $G$ be a Lie group that has a compact subgroup and $d$ a left invariant distance. Then we can find a measure $\mu$ and a group element $g$ such that $R_g$ (or $L_g$) is not optimal for the OT problem with cost $d^p$ with $p \geq 1$.
\end{theorem}

\begin{proof}
    By considering measures supported on the compact subgroup we assume w.l.o.g. that $G$ is compact.
    Every compact Lie group has a subgroup that is isomorphic to $SO(2)$. To see this, take some element in the Lie algebra and consider its generated one-parameter subgroup. If we take the closure of this subgroup we get an Abelian, compact, connected subgroup of $G$, which has to be isomorphic to a torus $\mathbb{T}^k$ for some $k > 0$. This torus then has a subgroup, say $H$, isomorphic to $SO(2)$. Using the isomorphism $H \rightarrow SO(2)$ we can construct a left-invariant metric on $SO(2)$ such that the isomorphism is an isometry. We can then construct an example as in \cref{prop:s1_example} to get a counter example on $G$.
\end{proof}
As a consequence of \cref{prop:compact_subgroup}, since $\SE$ has a compact subgroup, we have the following corollary.

\begin{corollary}
Left-actions on $\SE$ are not left-invariant. 
Right actions are left-invariant. 
However, both are not optimal maps in OT problems on $\SE$ with left-invariant cost $d^p$ with $p\geq 1$.
\end{corollary}
\begin{proof}
For all $g,h\in G$: 
$
L_g \circ R_h = 
R_h \circ L_g 
 \Rightarrow 
 (L_g)_{\#} \circ (R_h)_{\#}\mu = 
(R_h)_{\#} \circ (L_g)_{\#} \mu$ for all $\mu\in \cP_p(G)$. 
In a non-commutative Lie group like $G=\SE$  there exists $g,h \in G$ s.t. $L_g \circ L_h =L_{gh}\neq L_{hg}=L_h \circ L_g$ (where again we can take pushforwards). 
The rest follows by \cref{prop:compact_subgroup}.
\end{proof}

\section{Entropic Regularization and Scaling algorithms}
\label{sec:ER_scaling}
To solve OT problems of the form \cref{eq:MK-multimarg}, we follow the approach based on adding an entropic regularization to the loss function (see, e.g., \cite{cuturi2013sinkhorn, chizat2018scaling}). For $G=\SE$, when the OT cost $c=d^p$, this leads to numerical schemes that boil down to computing group convolutions with kernels that solely depend on the Riemannian distance. The strategy is very computationally intensive, though, because Riemannian distances in $\SE$ do not come in close form, and estimating them is an expensive operation. We leverage distance approximations that come in close form to make the scheme efficient (see \cite{bellaard2022analysis}). This section summarizes the whole approach and provides an error analysis of the approximations that we use.

\subsection{The Sinkhorn Algorithm}
\label{sec:algorithm}
The entropic-regularized version of problem \cref{eq:MK-multimarg} reads
\begin{equation}
\label{eq:entropy-loss}
\rho_\e = \inf_{\pi \in \cP(\cX)} \cL_\e(\pi) , \quad \text{with } \cL_\e(\pi)\coloneqq \cL(\pi)+ \sum_{i=1}^n F_i ((\mathrm{proj}_i)_\# \pi) + \epsilon \KL( \pi \Vert \pi^{\text{ref}} ),
\end{equation}
where $\e\geq 0$ is the regularization parameter.  The regularization term is the Kullback-Leibler (KL) divergence defined as
\begin{equation}
\KL ( \pi \Vert \pi^{\text{ref}} ) =
\begin{cases}
&\int_{\cX} \log \left( \frac{\rd \pi(x)}{\rd\pi^{\text{ref}}(x)} \right) \rd\pi(x),\quad 
\forall \pi \ll \pi^{\text{ref}}  
\text{ with }\pi,\,\pi^{\text{ref}}\in \cP(\cX). \\
&+\infty, \quad \text{else}.
\end{cases}
\end{equation}
and where the notation $\pi\ll \pi^{\text{ref}}$ means that $\pi$ is absolutely continuous with respect to $\pi^{\text{ref}}$. 
In our scheme, we work with
$$
\pi^{\text{ref}} = \otimes_{i=1}^n \alpha_i, \quad \alpha_i \in \cP(G)
$$
as the reference measure, where we choose the Haar measure $\alpha_i = \haar$ for all $i=1,\dots,n$. This is done to guarantee invariance and equivariance properties in the scheme as we prove in \cref{cor:equivariance-sinkhorn}. In \cref{eq:entropy-loss}, the functions $F_i : \cM(G) \rightarrow \R$ are assumed to be proper, lower-semi-continuous and convex. They encode marginal constraints or constraints of other nature. The regularization term makes the problem strongly convex and guarantees the uniqueness of a minimizer $\pi^*_\e$. One can show that as $\e\to 0$, $\pi^*_\e$ narrowly converges to the maximum entropy minimizer of the original problem (see \cite{leonard2010schrodinger, carlier2017convergence} for the proofs in the case of the Wasserstein distance, barycenters, and gradient flows).

For problems that have a functional $\cL$ of the form $\cL(\pi) = \int_\cX c \,\rd\pi$ one can reformulate \cref{eq:entropy-loss} using  simple algebraic manipulations as
\begin{equation}
\label{equ:generic_bregman}
    \inf_{\pi \in \cP(\cX)} \epsilon \KL\left(\pi \,\Vert \, e^{-c/\epsilon} \otimes_{i=1}^n \alpha_i \right) + \sum_{i=1}^n F_i ((\mathrm{proj}_i)_\# \pi).
\end{equation}
Recall for example the definition of $c$ for the Wasserstein distance: \cref{eq:LWp} and for the barycenters: \cref{equ:cost_bary}. We next recall the scaling algorithm to solve this problem (see \cite{chizat2018scaling} for its analysis). To simplify the presentation, we restrict the discussion to the case $n=2$. Given an initialization $b^{(0)} \in L^\infty(G)$ the algorithm repeats the following two operations until convergence:
\begin{equation}
\label{equ:scaling_iterations}
    a^{(l+1)} = \frac{\prox^{\KL}_{F_1/\epsilon}\left( \cK b^{(l)} \right)}{\cK b^{(l)}}, \;\;\; b^{(l+1)} = \frac{\prox^{\KL}_{F_2/\epsilon}\left( \cK^T a^{(l+1)} \right)}{\cK^T a^{(l+1)}},\quad l\in \bN,
\end{equation}
were $\prox^{\KL}_{F}$ denotes the \textit{proximal operator} of $F$ with respect to the KL divergence, and is defined as 
\begin{equation}
\label{equ:proximal_operator}
\prox^{\KL}_F(z) \coloneqq \argmin_{s \in \cP(G)} F(s) + \KL(s \;\Vert \; z).
\end{equation}
The application of the operator $\cK$ and its $L^2$-adjoint $\cK^T$ is defined as
\begin{equation}
\label{equ:convolution_scaling}
(\cK b)(x) \coloneqq \int_{G} e^{-c(x,y)/\epsilon} b(y) \;\rd \alpha_2(y), \;\;\; (\cK^T a)(y) \coloneqq \int_{G} e^{-c(x,y)/\epsilon} a(x) \;\rd \alpha_1(x).
\end{equation}
We are looking for a fixed point of these iterations, say $a^*, b^*$, after which we can recover the optimal coupling (as a density w.r.t.~$\alpha_1 \otimes \alpha_2$) by $\pi^*_\e(x,y) = a^*(x) e^{-c(x,y)} b^*(y)$ for all $(x, y)\in G\times G$. As an example, for the Wasserstein distance, we can enforce that $\pi \in \Pi(\cX; \mu_1, \mu_2)$ by choosing for $i=1,\,2$,
\begin{equation}
F_i(\nu) = \imath_{\{=\}}\big( \cdot | \mu_i \big)
\coloneqq
\begin{cases}
0,& \text{ if } \nu = \mu_i \in \cP(G) \\
+\infty,& \text{ else.}
\end{cases}
, \qquad 
\prox^{\KL}_{\imath_{\{=\}}\big( \cdot | \mu_i \big)}(z) = \mu_i\,.
\label{eq:Fi-marginal-constraint}
\end{equation}
Hence the (Sinkhorn) iterations read in this case (with point-wise division):
\begin{equation}
    a^{(l+1)} = \frac{\mu}{\cK b^{(l)}}, \;\;\; b^{(l+1)} = \frac{\nu}{\cK^T a^{(l+1)}},\quad \forall l\in \bN.
\end{equation}
See \cref{app:algorithm} for a complete algorithm for the barycenter problem. We conclude this section by showing that the entropic-regularized versions of the optimal transport problems still maintain all invariance/equivariance properties from \cref{lem:equivariance}. 

\begin{corollary}    
\label{cor:equivariance-sinkhorn}
The entropic-regularized scheme preserves the same equivariant and invariant properties of the unregularized OT problems.
\end{corollary}

\begin{proof}
Following the proof of \cref{lem:equivariance}, it suffices to show that $\pi \mapsto \cL_\e(\pi)$ is left-invariant. Since we showed this for the unregularized problems, it suffices to show left-invariance for the regularization term. We show this by
\begin{equation*}
    \begin{split}
        \KL\big((L_{g_0})_\# \pi \Vert \otimes_{i=1}^n \rd h ) &= \int_{\cX} \log \left( \frac{\rd (L_{g_0})_\# \pi}{\otimes_{i=1}^n \rd h} (g) \right) \otimes_{i=1}^n \dhaar{g} \\
        &= \int_{\cX} \log \left( \frac{\rd \pi}{\otimes_{i=1}^n \rd h}(L_{g_0^{-1}} g) \right) \otimes_{i=1}^n \dhaar{g}\\
        &= \int_{\cX} \log \left( \frac{\rd \pi}{\otimes_{i=1}^n \rd h}(g) \right) \otimes_{i=1}^n \dhaar{g} = \KL\big( \pi \Vert \otimes_{i=1}^n \haar ).
    \end{split}
\end{equation*}
Finally, the $F_i$ are used to encode marginal constraints (except for the gradient flow), taking the form \cref{eq:Fi-marginal-constraint}, and we can treat them the same way as the marginal constraints in \cref{lem:equivariance}. This proves that the regularized problem inherits the same symmetries as the unregularized problem. 
\end{proof}

\subsection{Group Convolutions}
\label{subsec:gconv}
When the cost function $c=d^p$ for a left-invariant distance $d$, the linear operation \cref{equ:convolution_scaling} becomes a \textit{group convolution} that one can implement more efficiently than in the general case. The group convolution of a function $f : G \rightarrow \R$ with a kernel $K: G \rightarrow \R$ is defined by
$$
(K * f)(g) \coloneqq \int_G K(h^{-1} g) f(h) \dhaar{h}, \quad \forall g\in G,
$$
Taking the reference measure $\pi^{\text{ref}} = \otimes_{i=1}^n \haar$ in the $\KL$ term, and defining
\begin{equation*}
\label{equ: kernel_se2}
K(g) \coloneqq e^{-d^p(e,g) / \epsilon}, \quad\forall  g\in G,
\end{equation*}
it follows by left-invariance of the distance map \cref{eq:left-inv-distance} that \cref{equ:convolution_scaling} can be written as convolutions
\begin{equation*}
(\cK b)(g) =\int_{G} e^{-d^p(g,h)/\epsilon} b(h) \dhaar{h}=
\int_{G} e^{-d^p(e,h^{-1}g)/\epsilon} b(h) \dhaar{h}=(K * b)(g).
\end{equation*}
Note that the operator $\cK$ and its adjoint $\cK^T$ are equal because the symmetry of the Riemannian distance $d$ carries over to the cost $c=d^p$.

The convolutional structure opens the door to very efficient implementations of \cref{equ:convolution_scaling}. $\SE$ group convolutions are extensively used in geometric deep learning for image processing (e.g. \cite{cohen2016group, weiler2018learning, weiler2019general,bekkers2018roto})
and robotics (e.g. \cite{chirikjian_engineering_2000}). 
Due to their widespread use in diverse applications, there have been many efficient implementations proposed (see \cite{chirikjian_engineering_2000, FrankenPhDThesis, bekkersbspline2019}).
Additionally, if the kernels have specific separability symmetries then further speed-ups can be expected \cite{sifre2013rotation,finzi}. Unfortunately, in \(\SE\) such separability constraints are not generally applicable (if $\mc{3}>0$, $\mc{1} \neq \mc{2}$). For a detailed overview of many steerable and non-steerable implementations with order complexities please refer \cite[Tables 3.1 and 3.2]{FrankenPhDThesis}. In all of our experiments, we always work with non-steerable direct implementations.

\subsection{Distance Approximations on $\SE$}
\label{subsec:dist_approx}
Geodesic distances on $\SE$ cannot be calculated in closed form, especially for anisotropic metrics which are of primary interest in this paper. In general, their estimation in practice is achieved using solutions to Eikonal PDE's (see e.g.~\cite{MirebeauJMIV,bekkers2015pde,sethian1999level}). However, these approaches can become computationally expensive for large discretizations and high anisotropies. Instead, we now introduce analytic approximations that can be conveniently calculated in closed form and derive their numerical properties. 

An analytic \(\SE\) distance approximation was introduced in \cite{smets2023pde}, building upon standard approximation theory in Lie groups \cite{rothschild1976hypoelliptic, terElst3}.
We sketch the general idea below. We start by noting that since $d$ is left-invariant, it suffices to approximate $d(e, g)$ for all $g\in \SE$ since in this case we can then evaluate $d(h, g)=d(e, h^{-1}g)$ for all $h, g\in G$.

For a Lie group with a surjective Lie group exponential \(\exp : T_eG \to G\) we can, after choosing an appropriate subset $S \subset$ \(T_eG\), make it injective.
This way the Lie group logarithm \(\log : U \to T_eG\), the inverse of the exponential, is well-defined on $U = \exp(S)$, and we can use it to create an approximation of \(d(e,g)\) in the following way.

The \textit{logarithmic distance approximation} \(\rho_c : U \to \R_{\geq 0}\) is defined as \cite{smets2023pde}:
\begin{equation}
    \rho_c(g) = \| \log g \|_e
\end{equation}
where \(\| \cdot \|_e : T_eG \to \R_{\geq 0}\) is the norm induced by \(\cG_e\).
The calculation of \(\rho_c\) is relatively easy and often expressible in closed form for many important Lie groups (for example matrix Lie groups), which means this approximation is generally applicable.
The logarithmic approximation has the important global property that \(d(e,g) \leq \rho_c(g)\) and one can show that it locally matches the distance in a precise sense explained further in the previously mentioned literature.

In \cite{bellaard2022analysis} the logarithmic approximation \(\rho_c\) on \(\SE\) is further refined to the \textit{half-angle approximation} \(\rho_b\) defined on $\SE$ as
\begin{equation}
\label{eq:rhob}
    \rho_b(g) \coloneqq \sqrt{ (\mc{1} b^1(g))^2 + (\mc{2} b^2(g))^2 + (\mc{3} b^3(g))^2}.
\end{equation}
where \(b^i : \SE \to \R\) are the \textit{half-angle coordinates}, which for a given element $g = (x,y,\theta)$ can be computed as\footnote{Where we use the small-angle convention $\theta \in [-\pi, \pi)$.}:
\begin{equation} 
\label{equ:logcomp}
    \begin{array}{ll}
        \Big( b^1(g),b^{2}(g),b^{3}(g) \Big) &=
        \big(\,
        x\cos (\theta/2) + y \sin (\theta/2), \, 
        -x \sin(\theta/2) + y \cos (\theta/2), \,
        \theta \,
        \big).
    \end{array}
\end{equation}

The approximative distance \(\rho_b\) can be extended naturally to a metric-like structure on \(\SE\) by defining \(\tilde \rho_b(g_1, g_2 ) := \rho_b(g_1^{-1} g_2)\).
It is worth mentioning that this $\tilde \rho_b$ is not always strictly a metric on $\SE$, as it is not guaranteed to satisfy the triangle inequality.
However, we will refer to it as a distance purely because it serves as an approximation to it. 

The logarithmic and half-angle approximation both have the same global symmetries as the exact distance \(d(e,g)\) as shown in \cite[Lemma 3]{bellaard2022analysis}. 
However, in contrast to \(\rho_c\), the approximation \(\rho_b\) also has the desirable properties described in the following lemma.
\begin{lemma}[Corollary 2, Lemma 7 from \cite{bellaard2022analysis}]
\label{thm:distance_approx}
    The Riemannian distance $d = d_\cG$ defined in \cref{eq:metric_SE2}, and the approximate distance $\rho_b$ defined in \cref{eq:rhob} are globally equivalent in the sense that
\begin{equation}
\label{eq:equivalence-rhob}
        \frac{1}{\zeta} d(e,g) \leq \rho_b(g) \leq \zeta d(e,g),\quad \forall g\in \SE,
\end{equation}
    where $\zeta$ is the spatial anisotropy defined in \cref{equ:spatial_anisotropy}. Furthermore, around any compact neighborhood $\Omega \subset \SE$ of the identity element $e$, we have that
    \[
        d(e, g)^2 \leq \rho_b(g)^2 \leq \Big(1+\epsilon(g)\Big) d(e, g)^2, \quad \forall g= (x,y, \theta)\in \Omega,
    \]
    where
    \[\epsilon(g) := \frac{\zeta^2 - 1}{2 \mc{3}^2} \zeta^{4} \rho_b^2(g) + C_\Omega |\theta|^3\]
    and $C_\Omega \geq 0$ depends on $\Omega$.
\end{lemma}

\subsection{Error analysis}
\label{subsec:error_an}
Our resulting numerical algorithm involves two main approximations with respect to the original OT formulation: we introduce regularization, and we approximate $d$ with $\rho_b$. In this section, we theoretically analyze how much the final outputs deviate from the original OT solutions. We will focus on analyzing the Wasserstein distance problem. For this, recall that the original loss function to minimize is $\pi \mapsto\cL^{W_p}(\pi)$ as defined in \cref{eq:LWp}, and the value of the minimizer is $W_p(\mu, \nu)^p = \cL^{W_p}(\pi^*)$, where $\pi^*$ is the optimal transport plan. We denote the entropic-regularized version by $\pi \mapsto \cL_{\e}^{W_p}(\pi)$, and the value of the minimizer is denoted by $W_{p, \e}(\mu, \nu)$. For our discussion, it will be useful to add the dependency on the underlying distance $d$  in the notation, so we will write these quantities as $\cL^{W_p, d}$ and $W_p^{(d)}(\mu, \nu)$ for the non-regularized formulation, and $\cL_\e^{W_p, d}$ and $W_{p, \e}^{(d)}(\mu, \nu)$ for the regularized versions respectively. We furthermore use the notation $\pi^*_{d, \epsilon}$, $\pi^*_{\rho_b, \epsilon}$ for the regularized optimizers, and $\pi^*_{d, \epsilon, n}$, $\pi^*_{\rho_b, \epsilon, n}$ for their Sinkhorn approximations after $n$ steps.

We prove in the next lemma that $W_p^{(d)}$ and $W_p^{(\rho_b)}$ are equivalent in the sense that we can upper and lower bound one quantity with the other. The same result applies to the regularized versions.
\begin{theorem}
\label{lem:w-bound}
For all $\mu, \nu \in \cP(\SE)$ it holds that
\begin{equation}
\frac 1 \zeta W_{p}^{(d)}(\mu, \nu) \leq W_{p}^{(\rho_b)}(\mu, \nu)  \leq \zeta W_{p}^{(d)}(\mu, \nu) 
\end{equation}
and
\begin{equation}
\frac 1 \zeta W_{p, \zeta^p \e}^{(d)}(\mu, \nu) \leq W_{p, \e}^{(\rho_b)}(\mu, \nu)  \leq \zeta W_{p, \zeta^{-p}\e}^{(d)}(\mu, \nu).
\end{equation}
\end{theorem}

\begin{proof}
Let $\pi \in \Pi(\mu, \nu)$. Using \cref{eq:equivalence-rhob}, we have that 
\begin{equation*}
\begin{split}
     W_{p, \e}^{(\rho_b)}(\mu, \nu)^p &\leq \int \rho_b(h^{-1}g)^p \rd \pi(h,g) + \epsilon \KL (\pi \Vert \haar \otimes \haar) \\ 
     &\leq \zeta^p \left(\int d(h,g)^p \rd \pi(h,g) + \frac{\epsilon}{\zeta^p} \KL (\pi \Vert \haar \otimes \haar) \right).
\end{split}
\end{equation*}
Since this holds for all $\pi \in \Pi(\mu,\nu)$, we can take the infimum over $\pi$ and the $p$-th root to get
\begin{equation}
    W_{p, \e}^{(\rho_b)}(\mu, \nu)  \leq \zeta W_{p, \zeta^{-p}\e}^{(d)}(\mu, \nu).
\end{equation}
The other side of the bound can be derived in an analogous way. To get the result on the unregularized problem, one can set $\epsilon = 0$.
\end{proof}

We next only consider $p=2$ and leverage recent results from  \cite{eckstein2022quantitative} to estimate how much the optimal coupling $\pi^*_{d, \epsilon}$ with exact distance $d$ differs from the optimal coupling $\pi^*_{\rho_{b}, \epsilon}$ with the distance approximation. We also compare it with $\pi^*_{\rho_{b}, \epsilon, n}$, the coupling that one can compute in practice after $n$ Sinkhorn iterations. For this, we need to introduce the following metric on $\SE \times \SE$,
\[
\tilde{d}((g_1, h_1), (g_2, h_2)) \coloneqq d(g_1, g_2) + d(h_1, h_2),
\quad \forall (g_1,h_1), (g_2,h_2)\in \SE\times\SE .
\]

\begin{theorem}
\label{lem:error_bound_optimizers}
Let $\mu, \nu \in \cP(\SE)$ be bounded and compactly supported measures, and let $\pi^*_{d, \epsilon}$ and $\pi^*_{\rho_{b}, \epsilon}$ be the associated optimal regularized transport plans for the different costs. Then
\begin{equation*}
W_1^{\tilde d}(\pi^*_{d, \e}, \pi^*_{\rho_b, \e, n}) \leq
\frac{C_1}{\epsilon \sqrt{2}} \; \sup_{g = (x,y,\theta) \in S} \left(\frac{\zeta^2 - 1}{2 \mc{3}} \zeta^4 \rho_b(g)^2 + C_2 |\theta|^3\right) d(e,g)^2
+ \frac{C_3}{\epsilon} n^{-4},
\end{equation*}
where $C_1 > 0$ is a constant depending on $\mu, \nu$, $C_2,\, C_3\geq 0$ are constants, and
$$
S \coloneqq \{ g \in \SE \,|\, g = h^{-1} k, h \in \spt(\mu), k \in \spt(\nu)\}.
$$
\end{theorem}
\label{app:proof}
\begin{proof}
We first apply the triangle inequality to the term we wish to bound:
\begin{equation}
W_1^{\tilde d}(\pi^*_{d, \e}, \pi^*_{\rho_b, \e, n}) \leq W_1^{\tilde d}(\pi^*_{d, \e}, \pi^*_{\rho_b, \e}) + W_1^{\tilde d}(\pi^*_{\rho_b, \e}, \pi^*_{\rho_b, \e, n}).
\label{eq:ineq-pert-analysis}
\end{equation}
We then bound the two terms separately. For the first term, we wish to apply Proposition 3.12 from \cite{eckstein2022quantitative}. Since we assume $\mu, \nu$ are bounded and have compact support, we can apply the proposition with $p = \infty$ and $q = 1$. We then have to check that we satisfy condition $I_1$ from \cite{eckstein2022quantitative}. This is true thanks to Lemma 3.10 from \cite{eckstein2022quantitative}, again because we have compact and bounded marginals. Then proposition 3.12 from \cite{eckstein2022quantitative} gives us
\begin{equation}
\label{eq:perturbation}
    W^{\tilde{d}}_1(\pi^*_{d, \epsilon},\pi^*_{\rho_{b}, \epsilon}) \leq \frac{C_1}{\epsilon \sqrt{2}} \norm{d^2 - \rho_b^2}_{L^\infty(\mu \otimes \nu)}.
\end{equation}
We next apply \cref{thm:distance_approx} to derive:
\begin{align}
\norm{d^2 - \tilde \rho_b^2}_{L^\infty(\spt(\mu \otimes \nu))}
&\leq \sup_{(g,h) \in\spt(\mu \otimes \nu)} |d(h,g)^2 - \rho_b(h^{-1}g)^2| \nonumber\\
&= \sup_{g \in S} |d(e,g)^2 - \rho_b(g)^2| \nonumber\\
&\leq \sup_{g = (x,y,\theta) \in S} \left(\frac{\zeta^2 - 1}{2 \mc{3}} \zeta^4 \rho_b(g)^2 + C |\theta|^3\right) d(e,g)^2. \label{eq:dist-pert}
\end{align}
Plugging \cref{eq:dist-pert} into \cref{eq:perturbation} yields the bound on the first term of \cref{eq:ineq-pert-analysis}. For the second term, Theorem 3.15 from \cite{eckstein2022quantitative} guarantees the existence of a constant $C_3\geq 0$ (which is independent of $\e$) such that
$$W_1^{\tilde d}(\pi^*_{\rho_b, \e}, \pi^*_{\rho_b, \e, n})\leq \frac{C_3}{\epsilon} n^{-4}.$$
Combining the two inequalities concludes the proof.
\end{proof}

It follows from the bounds from \cref{lem:error_bound_optimizers} that $\pi^*_{d, \e}$ and $\pi^*_{\rho_b, \e, n}$ are close only when the supports of $\mu$ and $\nu$ are concentrated and close to each other, which is something we can control using the discretization of the measures. The proposition also reveals that the Sinkhorn iterations decay at a rate of $n^{-4}$, which suggests a rather fast convergence of the algorithm. One might even be able to prove faster convergence, see \cite{eckstein2024hilberts, deligiannidis2021quantitative}, but this is not necessary for us, as the error we make in the approximate distance is already larger than the iteration error.
\section{Numerical Experiments}
\label{sec:numerical_results}
\begin{figure}[t]
    \centering   
    \includegraphics[width=1\linewidth, page=1]{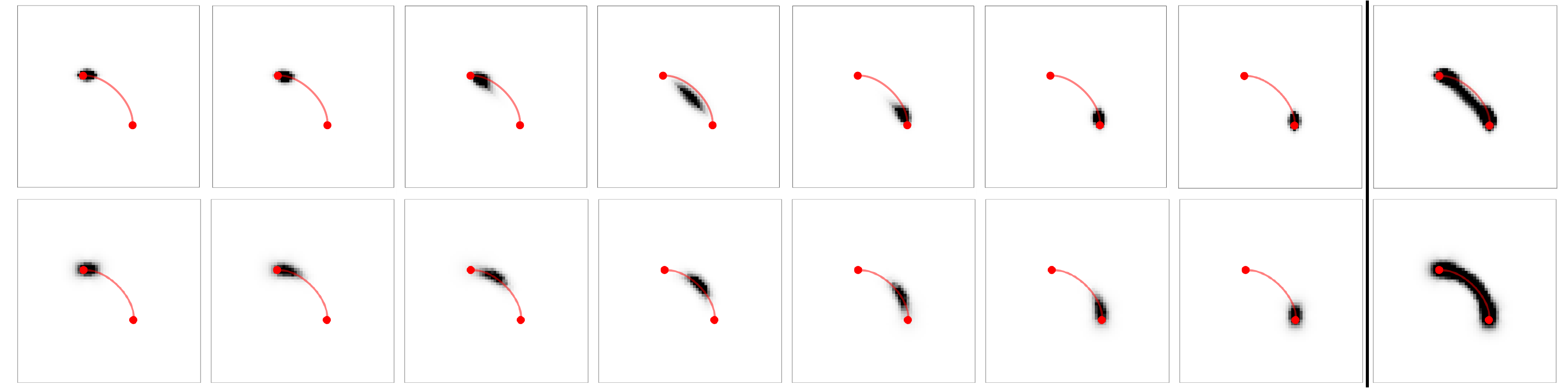}
    \caption{\textbf{Tracing U-Curve Geodesics in $\SE$:} We interpolate between two Dirac masses using a anisotropic metric $(w_1,w_2,w_3) = (1, 5, 2.5)$ in $\SE$. The overlaid red curve is the exact sub-Riemannian geodesic between the two endpoints computed via analytic solutions from \cite{duits2014association}. {\bf 1\textsuperscript{st} Row:} Interpolation with Heat Diffusion in $\SE$. {\bf 2\textsuperscript{nd} Row:} Interpolation with the proposed half-angle distance approximation $\rho_b$ from equation~\!\cref{eq:rhob}. We can observe that the transport from our distance approximation is more accurate to the analytic solution in comparison to the analogous solution with Heat Diffusion. {\bf Columns 1-7:} Spatially projected $\SE$ interpolations on $\R^2$. {\bf Column 8:} A cumulation of columns 1-7 showing the path that was traced. See \cref{fig:s_curve} for an additional example.} 
    \label{fig:geodesic_path}
\end{figure}
We now report the results from the numerical implementation of our method for different applications. We apply the Sinkhorn algorithm with a $\SE$ group convolution together with a Gibbs kernel made from the half-angle distance approximation $\rho_b$. We use the $\SE$ group convolution implementation from the publicly available LieTorch package in \cite{smets2023pde}. For convenience we describe the $\SE$ Wasserstein barycenter algorithm, which will be used for the image examples, more precisely in the supplementary materials \cref{app:algorithm}. 

\subsection{Barycentric Interpolation with Synthetic Examples on \(\SE\)}
We first validate our method by tracing the path obtained from a 2-Wasserstein interpolation of point masses in \cref{fig:geodesic_path} and \cref{fig:s_curve}. We discretize $\SE$ into a volume of $64 \times 64 \times 16$ by discretizing the spatial X-Y plane $[0,1] \times [0,1]$ into a $64 \times 64$ grid and $\theta \in [0, 2\pi)$ into 16 orientations. We are particularly interested in tracing geodesics obtained under high-anisotropies or near sub-Riemannian conditions on $\SE$. In these experiments, we validate a well-known result in OT: that the barycenter between two measures, recall \cref{equ:interpolation_operator}, coincides with the McCann interpolation or displacement interpolation, which transports the mass of the measure along a geodesic, see for example \cite[Section 7]{villani2008optimal}.  
We compare the path of the interpolations with the true sub-Riemmanian geodesic. The overlaid red curves in \cref{fig:geodesic_path} and \cref{fig:s_curve} are computed with exact analytic formulas in \cite{moiseev2010maxwell, duits2014association}. With increasing spatial anisotropies, Riemannian geodesics rapidly converge to the sub-Riemannian geodesics \cite{duits2018optimal} where exact solutions are available. We compare the output of entropy-regularized Wasserstein interpolation with:
\begin{enumerate}
\item Solving Anisotropic Heat Diffusion on $\SE$. This approach is a dominant paradigm for computing entropic OT on geometric domains (see \cite{solomon2015convolutional}).
\item Our distance approximation $\rho_b$ with logarithmic coordinates from equation \cref{eq:rhob}.
\end{enumerate}

\subsection*{Tracing U-Curve Geodesics in $\SE$} We place two point masses at $g_1 = (0.375,0.625, \\ \theta = 0^{\circ})$ and $g_2 = (0.625,0.375, \theta = 90^{\circ})$. As we see in \cref{fig:geodesic_path} the entropic-regularized 2-Wasserstein interpolation with $\rho_b$ very closely follows the exact geodesic. We observe that in this case, it is numerically more accurate than the solution with anisotropic heat diffusion. This could be because optimal transport via the classical heat diffusion builds upon Varadan's theorem \cite{varadhan1967behavior} and hence requires very small time steps for high accuracy. 
\begin{figure}[t]
    \centering   
    \includegraphics[width=1\linewidth]{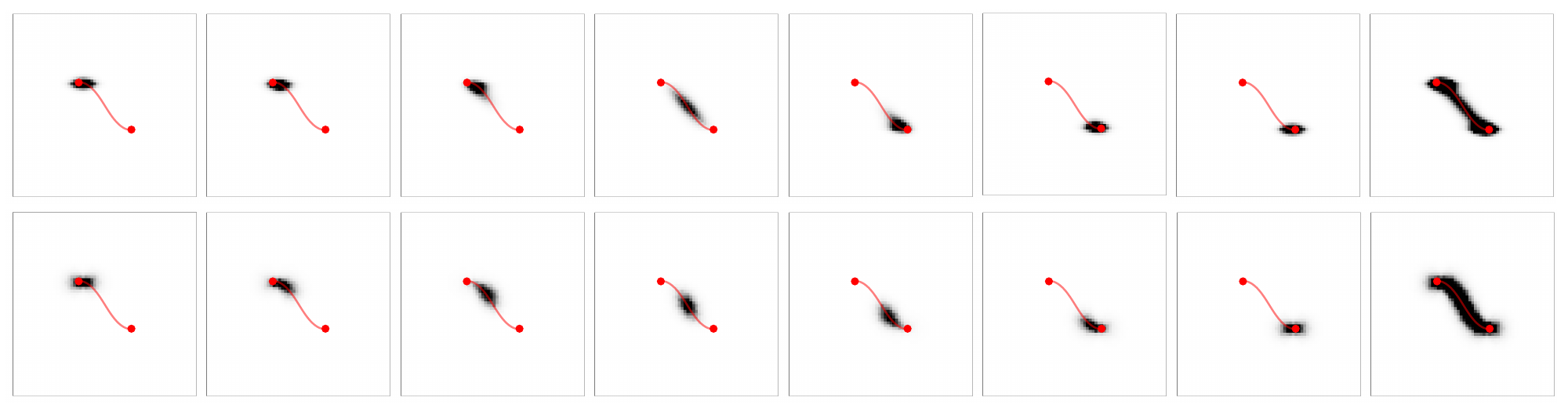}
   \caption{\textbf{Tracing S-Curve Geodesics in $\SE$:} We interpolate between two Dirac masses $g_1 = (0.375,0.625, \theta = 0^{\circ})$ and $g_2 = (0.625,0.375, \theta = 0^{\circ})$ using anisotropic metric $(w_1,w_2,w_3) = (1, 7, 2.5)$ in $\SE$. The overlaid red curve is the exact sub-Riemannian geodesic between the two endpoints computed via analytic solutions from \cite{duits2014association} with challenging extreme curvatures at the boundaries. {\bf 1\textsuperscript{st} Row:} Interpolation with Heat Diffusion in $\SE$. {\bf 2\textsuperscript{nd} Row:} Interpolation with the version of the distance approximation $\rho_{b,com}$ \cite[Eq.26]{bellaard2022analysis}. The transport from the distance approximation is more accurate to the analytic solution in comparison to the analogous solution with Heat Diffusion. {\bf Columns 1-7:} Spatially projected $\SE$ interpolations on $\R^2$. {\bf Column 8:} A cumulation of columns 1-7 showing the path that was traced. } 
   \label{fig:s_curve}
\end{figure}
\subsection*{Tracing S-Curve Geodesics in $\SE$} Sub-Riemannian geodesics come either as U or S shapes \cite{moiseev2010maxwell,duits2014association} and we also demonstrate a challenging $S$-case with extreme curvatures at the boundary points in the same setup as \cref{fig:geodesic_path}. In \cite{bellaard2022analysis} a series of distance approximations are studied, all of which were motivated by the logarithmic approximation we explain in \cref{subsec:dist_approx}. $\rho_b$, recall \cref{eq:rhob}, is a very good distance approximation for spatial anisotropies (\ref{equ:spatial_anisotropy}) in the range $\zeta<5$. 
This was sufficient and yielded stable results in all our experiments. 
However, a further adjustment in the approximation should be made (with an automatic switch to the sub-Riemannian setting) if the spatial anisotropy is more extreme \cite[Eq.26]{bellaard2022analysis}.  This is particularly relevant in the tracing of S-shaped sub-Riemannian geodesic in $\SE$ with high curvature at the endpoints. As $\zeta \to \infty$ the Riemannian geodesics converge to more challenging sub-Riemannian geodesics, cf.~\cite[Thm.2]{duits2018optimal}. In our experiments, $\zeta=7$ was large enough to verify the concentration of the optimal transport around an exact, high curvatures, sub-Riemannian S-shaped geodesics in \cref{fig:s_curve}.

\begin{figure}
    \centering   
    \subfloat{\includegraphics[width=1\linewidth]{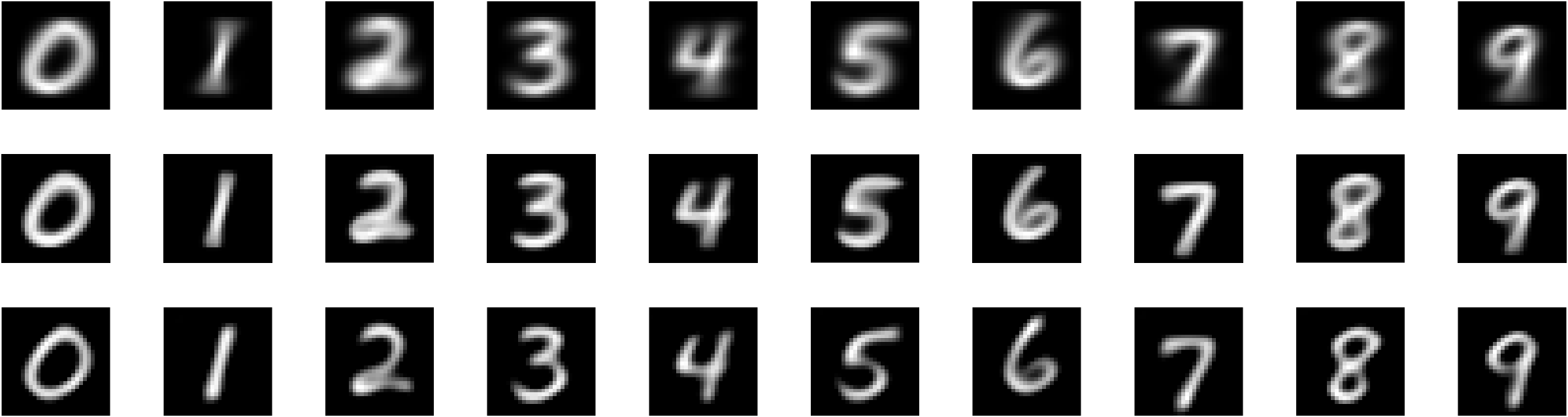}} \hspace{1cm}
    \caption{Average barycenters of all ($ \sim 6000$ images per class) images from the MNIST dataset. {\bf 1\textsuperscript{st} Row}: $L^2$ barycenter of all images per class - \LR. {\bf 2\textsuperscript{nd} Row}: 2-Wasserstein barycenter in $\mathbb{R}^2$ - \WR, {\bf 3\textsuperscript{rd} Row}: 2-Wasserstein barycenter in $\SE$ - \SER. \SER shows sharper interpolations in comparison with its counterparts on $\mathbb{R}^2$.} 
    \label{fig:MNIST_multiple}
\end{figure}

\subsection{Barycentric Interpolation of Images with Lifting to $\SE$}
\label{sec:interp-img}

\subsubsection*{Setting}
We compute barycentric averages of images having shapes represented as 2D contours and line structures. We report qualitative results from 2 datasets that were particularly valuable in this setting: MNIST \cite{deng2012mnist} and the QuickDraw\footnote{\url{https://quickdraw.withgoogle.com/data}}. MNIST is the well-known database of handwritten digits 0-9, with $\sim 6000$ images per class. Quickdraw is also a diverse database of doodles made by many people over the internet for different object classes.

We compare the following barycenters:
\begin{itemize}
\item[\LR:] Barycenters in the linear space $(L^2(\bR^2), \Vert\cdot\Vert_2)$,
\item[\WR:] Barycenters in $W_2(\R^2)$ with entropic regularization and Euclidean distance $\Vert\cdot\Vert_2$ in~$\bR^2$,
\item[\SER:] Barycenters in $W_2(\SE)$, with entropic regularization and $\rho_b$ as a distance approximation in $\SE$. This approach requires lifting the images from $L^2(\bR^2)$ to $\cP_2(\SE)$, and projecting back to $L^2(\bR^2)$ before and after the barycenter computation. These operations correspond to the operator $\widetilde \cW_\psi$ in \cref{sec:lifting_equivariance_relation}. We next explain how we have built them in practice in the next subsection.
\end{itemize}  

\subsubsection*{Lifting and Projecting Images to and from Measures: Practical Aspects}
To build a lifting $\widetilde \cW_\psi: L^2(\bR^2)\to \cP_2(\SE)$, we take the orientation score transform $\cW_\psi: L^2(\bR^2)\to L^2(\SE)$ introduced in \cref{eq:lift} as a starting point.
For a given image $f\in L^2(\bR^2)$, the lifted image $\cW_\psi(f)$ is not a probability measure because it may take on negative values\footnote{The cake-wavelets $\psi$ have negative side-lobes that fan out opposite to the orientation they measure \cite[Fig.6.4]{duits2005perceptual}}. 
They are problematic for our purposes but they play a crucial role in accurate image recovery. 
The first idea to fix the issue would consist of working with the absolute value. 
However, this option did not work in our experiments, and its failure is connected to the loss of the sign in the value of the orientation scores.
\begin{figure}
    \centering   
    \subfloat{\includegraphics[width=1\linewidth]{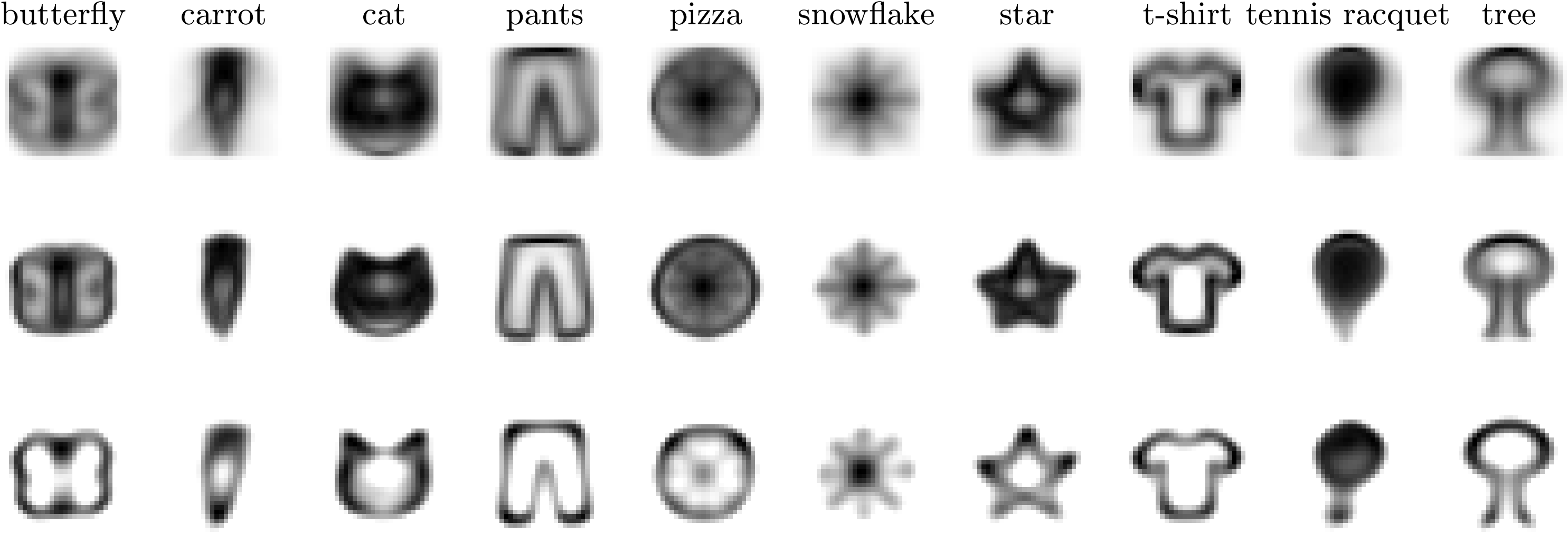}}
    \caption{Average barycenters of ($ \sim 6000$ images per class) images from the QuickDraw dataset. {\bf 1\textsuperscript{st} Row}: $L^2$ barycenter of all images per class - \LR. {\bf 2\textsuperscript{nd} Row}: 2-Wasserstein barycenter in $\mathbb{R}^2$ - \WR, {\bf 3\textsuperscript{rd} Row}: 2-Wasserstein barycenter in $\SE$ - \SER. \SER shows sharper interpolations in comparison with its counterparts on $\mathbb{R}^2$.} 
    \label{fig:doodle_multiple}
\end{figure}
Motivated by a similar construction from \cite{galeotti2022cortically}, we address this issue in practice by splitting the orientation scores into their respective positive and negative components and applying OT tools on them independently. For example, for computing \SER  between two images (recall \cref{fig:intro}) we do the following:

\begin{enumerate}
    \item Lift images $f_0, f_1\in L^2(\bR^2)$ to their scores $U_0 = \cW_\psi(f_0)$ and $U_1=\cW_\psi(f_1)$ (see \cref{eq:lift}). 
    \item Compute the positive and negative components of the scores: \newline $\{ U_k^+, U_k^-\}:= \{ \max\{U_k,0\}, -\min\{U_k,0\} \geq 0\}$ for $k \in \{0,1\}$ \\
    We normalize the output to sum up to 1 and therefore, we obtain measures $U_k^\pm \in \cP_2(\SE)$ for $k \in \{0,1\}$
    \item Compute the $\SE$ Wasserstein barycenters $\Phi_t(U_0^+, U_1^+)$ and $\Phi_t(U_0^-, U_1^-)$, using the interpolation operator \cref{equ:interpolation_operator}, and $\rho_b$ for the distance approximation. \Cref{cor:equivariance-sinkhorn} guarantees that this step is left-equivariant: $\Phi_t(\mathcal{L}_gU_0^{\pm},\mathcal{L}_gU_1^{\pm})= \mathcal{L}_{g}\Phi_t(U_0^{\pm},U_1^{\pm})$.   
    \item Combine projections to get the image interpolant $b_t\in L^2(\bR^2)$:
    \begin{equation} 
    \label{eq:final}
        b_t = \widetilde \rP(U^{+}_t, U^{-}_t) \coloneqq \rP(U^{+}_t) - \rP(U^{-}_t), \quad \forall t\in [0,1], 
    \end{equation}
    where $\rP$ is the projection operator defined in \cref{eq:proj}. Our projection $\widetilde \rP$ ensures the interpolation property
    \[
    b_{t=0}=f_0 \textrm{ and } b_{t=1}=f_1.
    \]
\end{enumerate}
Since all steps are left-invariant, the whole procedure is left-invariant. Also, we emphasize that it is vital to use the same OT settings for the positive and negative components of the scores to ensure the respective interpolations are synced. In all our experiments we work with images that are also represented as densities $\cP_2(\R^2)$. We threshold the projected output $b_t$ from Step 4 by keeping only the dominant positive response that is eventually normalized into $\cP_2(\R^2)$ for \SER as well as \WR.      
\begin{figure}[t]
    \centering   
    \includegraphics[width=1\linewidth, page=2]{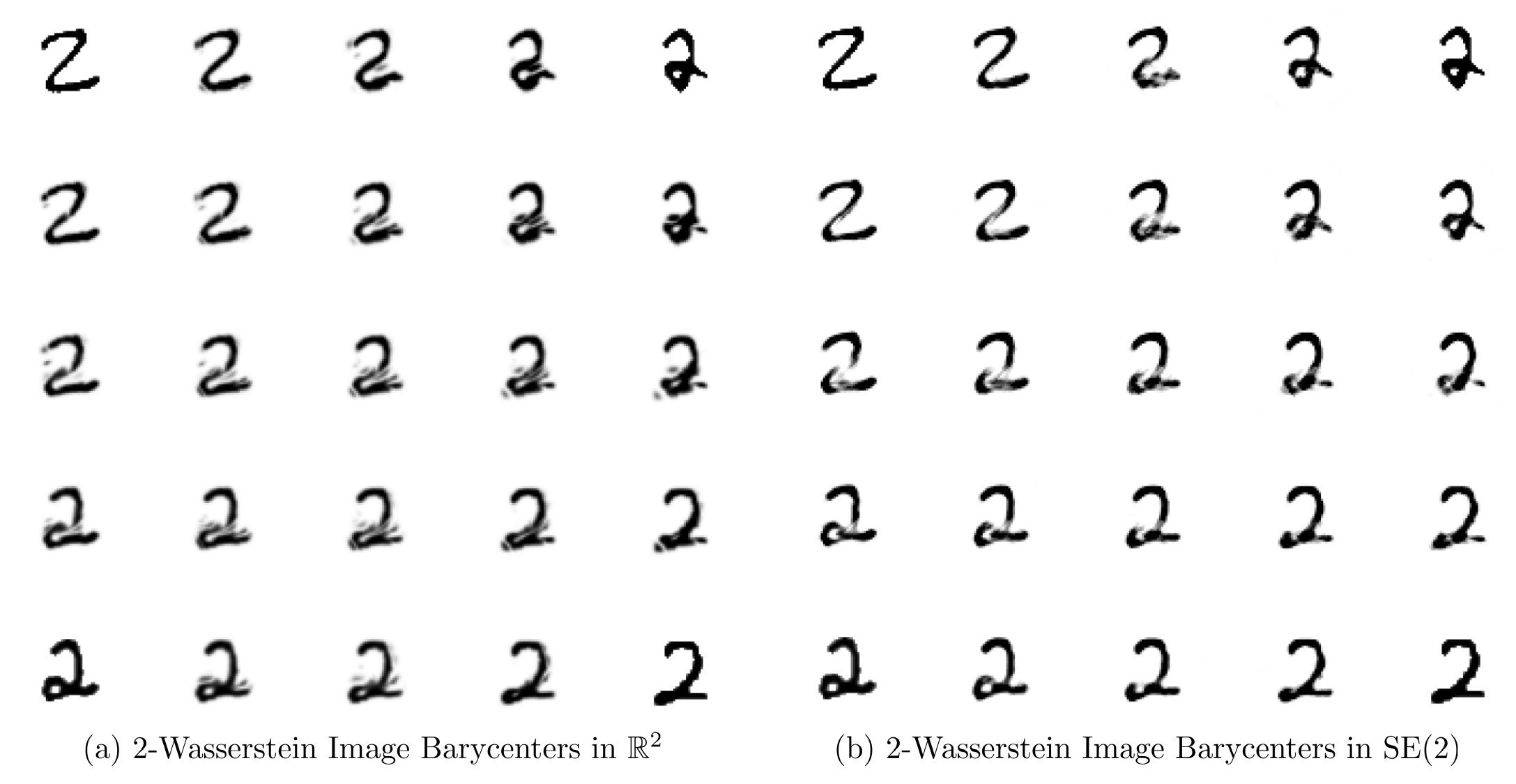}
    \caption{\textbf{Comparing image barycenters in $\R^2$ and $\SE$:} We compute the barycenter between the 4 images at the corners, with linear weights in between. We select the regularization parameter such that the effective line thickness for both cases is approximately the same.
    \SER promotes transport along line structures and hence, under this setting, we observe more undesirable mass splitting for \WR in comparison to \SER which shows a more meaningful interpolation. All images have been normalized to densities in $\R^2$ and we keep the same intensity range for a fair visualization. See \cref{fig:twos} for an additional example.} 
    \label{fig:umbrella}
\end{figure}

\subsubsection*{Results}

\Cref{fig:MNIST_multiple} and \cref{fig:doodle_multiple} show the results for the MNIST and QuickDraw datasets respectively. Barycenters involve around $6000$ images per class. All input images were of dimension $28 \times 28$ and for the $\SE$ interpolation, we used 16 orientations for lifting and we chose the metric parameters setting $\zeta = 1.5$.  We normalize the results so that they are all visualized as probability densities on $\R^2$. 

Expectedly, \WR gives a visually more coherent representation than the \LR and it does not suffer from the blurry artifacts present in the straightforward linear interpolation. 
However, we can see that \SER is even sharper than \WR and more representative of each class, yielding even more meaningful interpolations. This can be attributed to the lifting to the group and an-isotropic metrics that encourage mass movement over line structures.   

In \cref{fig:umbrella} we compare \WR and \SER more directly. We perform an interpolation of 4 images (at the corners) with linear weights. Again, we observe more meaningful interpolations for \SER in comparison to \WR.

\subsection{Barycentric Interpolation of Orientation Fields}
A natural application for optimal transport on $\SE$ is the interpolation of orientation fields on $\R^2$, where one can associate a probability measure on $\SE$ to an orientation field. As reported earlier, the interpolation of orientation fields using optimal transport has been explored in prior work, albeit on different geometries \cite{solomon2019optimal, peyre2019quantum}.  
However, as we explain below, orientation fields on Euclidean domains can be naturally lifted onto the corresponding roto-translation group. We can therefore obtain a more convenient extension of \cite{solomon2015convolutional} which allows for the efficient transport of dense orientation fields without explicitly needing to compute or store the OT coupling. 
\subsubsection*{Lifting Orientation Fields: Practical Aspects}
\label{sec:vector-fields}
Given a vector field on $\R^2$ expressed in polar coordinates $v(\cdot)=|v(\cdot)|\;(\cos \psi(\cdot),\sin \psi(\cdot))^T$, with $\int_{\R^2} |v(x,y)| {\rm d}x{\rm d}y=1$, we lift it towards a regular probability measure $U\in \mathcal{P}(SE(2))$ via density: 
\begin{equation} \label{equ:lifting_vector_field}
U(x,y,\theta) = \begin{cases}
    |v(x,y)|
    & \text{if } d_{S^1}(\psi(x,y),\theta)< \frac{\pi}{N}  \\
    0 & \textrm{else}
\end{cases} ,
\end{equation}
where $N$ denotes the number of orientations (sampled equidistantly at steps of $\frac{2\pi}{N}$). We project back by
\begin{equation}
\label{equ:reconstruction_vector_field}   
v(x,y)= 
\int_{-\pi}^{\pi} U(x,y,\theta) \; {\rm d}\theta \; \left(
\begin{array}{c}
\cos \psi^*(x,y) \\ \sin \psi^*(x,y)
\end{array}
\right) \ , \textrm{ with }\psi^*(x,y)=\underset{\theta \in [-\pi,\pi)}{\argmax}\;  U(x,y,\theta)
\end{equation}
This basic practical lifting replaces the Dirac distribution in \cref{equ:lifting_vector_field2} by uniform distributions over 0-th order B-splines. The construction \cref{equ:lifting_vector_field} and reconstruction \cref{equ:reconstruction_vector_field} can be generalized to advanced higher-order B-spline orientation channels \cite{Felsberg2} but this is beyond the scope of this article.    
\begin{figure}[t]
    \centering   
    \includegraphics[width=1\linewidth]{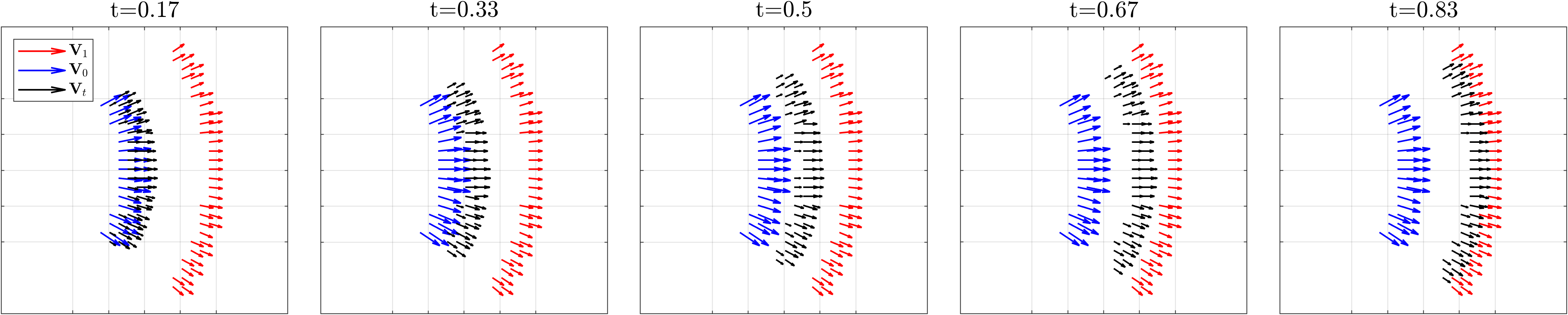}
    \caption{{\bf Entropic optimal transport with orientation fields on $\R^2$:} An example of \emph{barycentric transport} of orientation fields defined over $\mathbb{R}^2$. We lift an orientation field on $\R^2$ to the group $\SE$ by assigning a magnitude and orientation to each location $(x,y) \in \mathbb{R}^2$. Using our method, we then compute the 2-Wasserstein barycenter of the two lifted measures and project back on the plane to visualize the \emph{interpolated} orientation field. In this example we use a isotropic metric $(w_1,w_2,w_3) = (1, 1, 1)$.} 
    \label{fig:vf_interp}
\end{figure}
We can see an example of this for the localized vector fields displayed in blue and red in \cref{fig:vf_interp}. The black arrows are obtained by lifting the two vector fields, interpolating the resulting measure in $\SE$ and then projecting back to a vector field on $\R^2$. The result is a natural way of interpolating the two vector fields, both in location and direction. In \cref{fig:vf_car} we can see a comparison between a (spatially) isotropic metric, i.e. $\zeta = 1$ and an anisotropic metric, $\zeta \gg 1$. Depending on what the vector field represents, different spatial anisotropies can be used. For example, the highly anisotropic model can serve as a model for Reeds-Shepp car movement \cite{duits2018optimal}, modeling movement with certain curvature constraints.

This procedure can be extended to vector fields of any dimension $d$, by lifting them to the homogeneous space of positions and orientations $\mathbb{M}_d=SE(d)/SO(d\!-\!1)$, and performing optimal transport over these spaces. As mentioned previously, most of our developments carry over naturally to homogeneous spaces.
\begin{figure}[t]
    \centering   
    \includegraphics[width=1\linewidth]{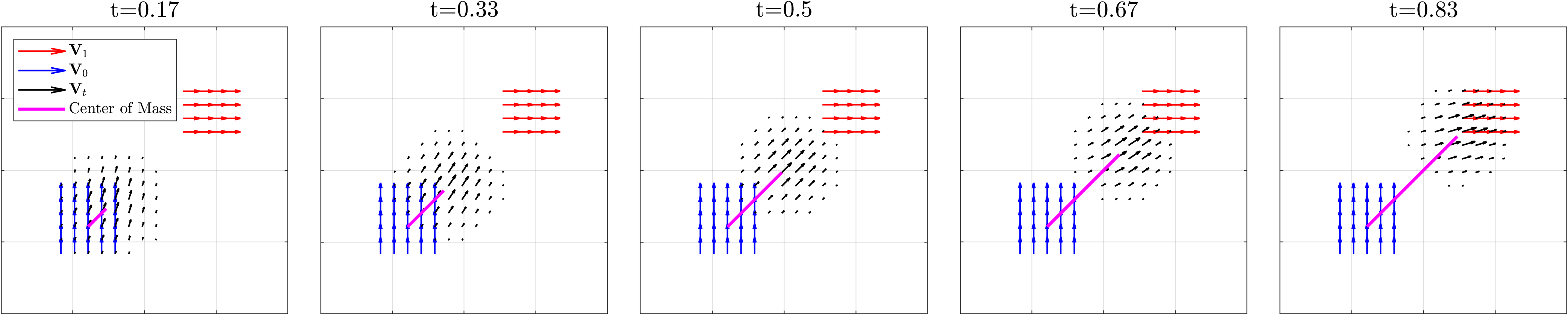}
    \includegraphics[width=1\linewidth]{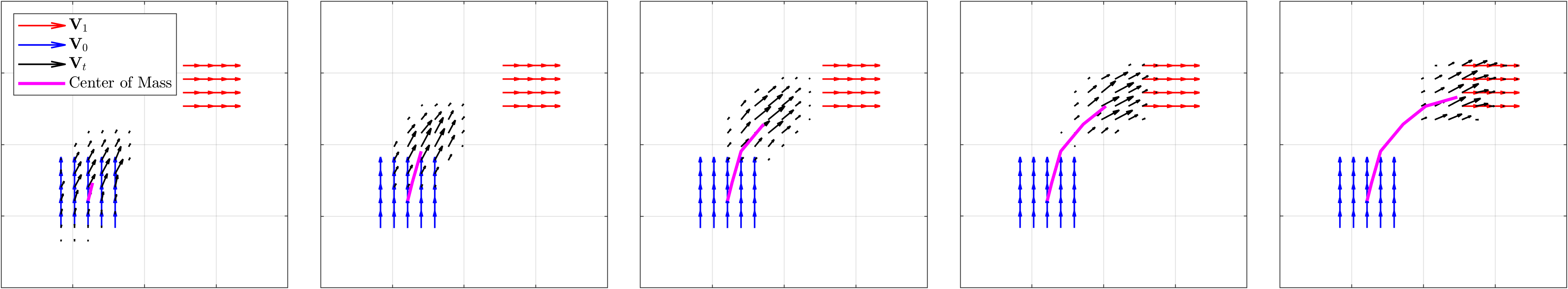}
    \caption{Another example similar to \cref{fig:vf_interp} where we interpolate a localized orientation field using different metrics in $\SE$. (Top) Isotropic: $(w_1,w_2,w_3) = (1, 1, 1)$ (Bottom) Anisotropic: $(w_1,w_2,w_3) = (1, 5, 1)$. We highlight how choosing anisotropic metrics in $\SE$ yields a natural u-curve  and serves as a model for Reeds-Shepp car movement \cite{duits2018optimal}} 
    \label{fig:vf_car}
\end{figure}
\subsection{Wasserstein Gradient Flows for Line Completion}
As a final example, we implement entropic Wasserstein gradient flows for solving PDEs. 
We use this setting to demonstrate a useful attribute of left-equivariant anisotropic $\SE$ image processing:  
\emph{line-completion}. 
We consider the images on the left of \cref{fig:diff_erosion_1} as our guiding example. This image comprises a dominant line structure placed amid randomly located smaller line elements.
We illustrate that we can complete and extract the main line by solving a non-linear diffusion equation in $\SE$. For this, we take such an image and view it as an initial condition $\mu_0\in \cP(G)$, with either \(G = \SE\) or \(\R^2\). We then compute the solution of 
\begin{equation}
\frac{\partial \mu}{\partial t} = \Delta_{\mathcal{G}} \mu^m
\label{eq:pde_ndiff}
\end{equation}
in a time-interval $[0, T]$, and with a left-invariant metric $\mathcal{G}$ on the Lie group \(G\). The PDE in \cref{eq:pde_ndiff} represents the porous medium equation and presents a Wasserstein gradient flow structure \cite{otto2001geometry} as per equation \cref{equ:cost_GF} with the functional
\begin{equation}
    F(\mu) = \frac{1}{m-1}\int_{G} \mu(g)^m \dhaar{g},
\end{equation}
for $m > 1$. 

We compute the solution for both groups $\R^2$ and $\SE$ with \(m = 5\) by solving the discrete gradient flow time-marching \cref{equ:GF_JKO} which we solve following the scheme introduced in \cite{peyre2015entropic}. 
On \(\R^2\) we use the standard Euclidean metric, and on \(\SE\) we choose anisotropic metric parameters \cref{eq:metric_SE2} to $(w_1,w_2,w_3) = (1, \sqrt{10}, \sqrt{2})$.

Diffusion in $\SE$ with such high anisotropic metrics have a \emph{line completion} behavior where dominant line structures get naturally connected in the evolution process. We can see this effect quite clearly in \cref{fig:diff_erosion_1}. We highlight that the anisotropy of $\SE$ leads to a strong \emph{local} orientation preference and this accumulates into the global line-connectivity behavior. In contrast, the isotropic diffusion in $\R^2$ does not yield the same result and the blur is uniform throughout the domain. Moreover, unlike $\SE$, left-invariant anisotropic metrics in $\R^2$ have a global preference for direction that does not encourage line connectivity in every situation. We include an additional, slightly more challenging example in \cref{fig:diff_erosion_2} to illustrate this point.

\Cref{fig:diff_erosion_1} and \cref{fig:diff_erosion_2} demonstrate the numerical implementation of the entropic Wasserstein gradient flow on $\SE$ using our distance approximation $\rho_b$ and thus serve as an additional practical validation of our framework.
\begin{figure}[t]
    \centering   
    \includegraphics[width=1\linewidth]{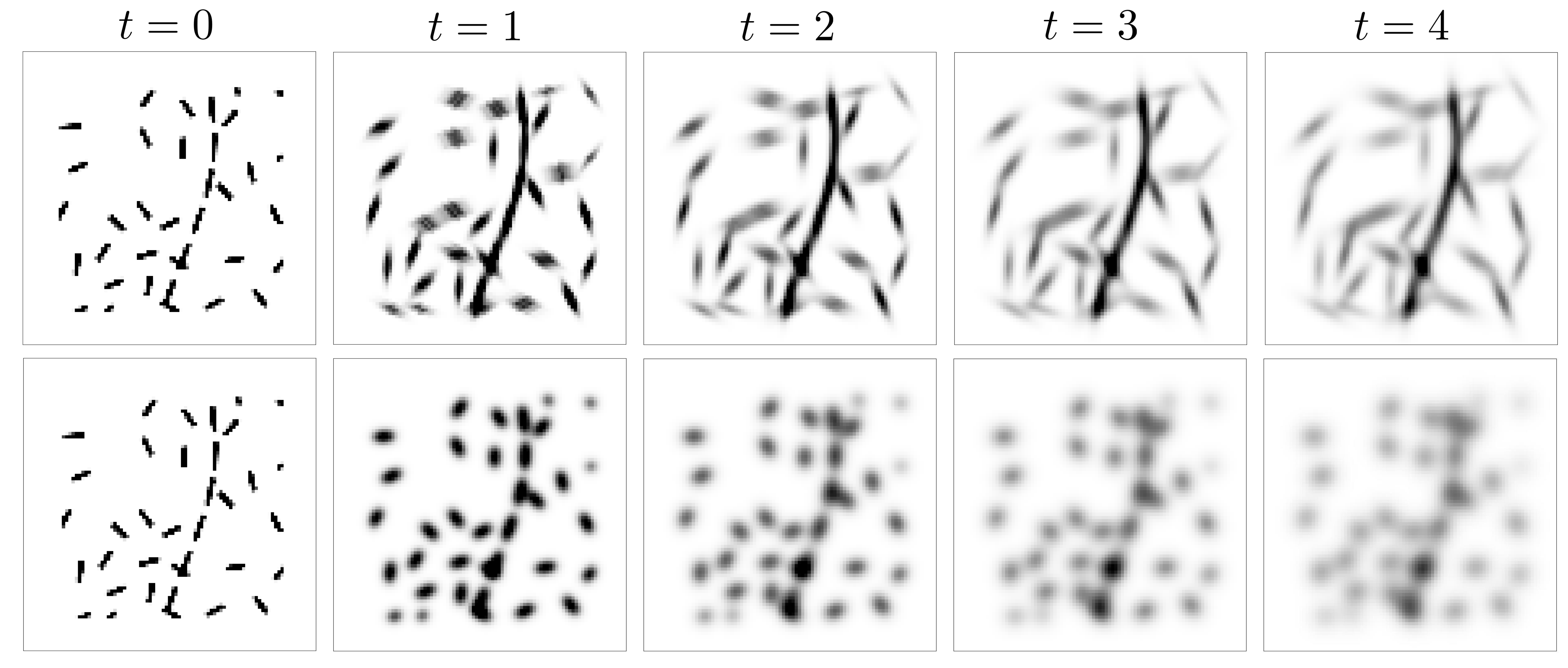}
    \vspace{-0.5cm}
    \caption{\textbf{Gradient flow in $\SE$} We implement a non-linear diffusion \cref{eq:pde_ndiff} with $m=5$ and with entropic Wasserstein gradient flow on both $\SE$ (top row) and $\R^2$ (bottom row) using the same initialization. Different from $\R^2$, diffusions with left-invariant anisotropic metrics in $\SE$ show a distinct \emph{line connectivity} behavior as depicted in the evolutions. See \cref{fig:diff_erosion_2} for an additional example.}    
    \label{fig:diff_erosion_1}
    
\end{figure}
\section{Conclusion}
In conclusion, we build and apply an efficient computational framework for optimal transportation over the roto-translation group $\SE$. We summarise the novel and impactful aspects of our contributions below:

\begin{enumerate}
\item We leverage distance approximations and group convolution operations toward efficiently extending the well-known Sinkhorn algorithm to $\SE$. We theoretically justify our construction with \cref{lem:w-bound} and \cref{lem:error_bound_optimizers}. We numerically validate our approximation by tracing sub-Riemannian geodesics in \cref{fig:geodesic_path} and \cref{fig:s_curve} and report their fidelity to the analytical solutions.
\item We show more sharper and meaningful barycentric interpolations of images in \cref{fig:MNIST_multiple}, \cref{fig:doodle_multiple} and \cref{fig:umbrella}. These results are aided by the line-completing behavior of left-invariant anisotropic metrics on $\SE$. This demonstrates the benefits of equivariant lifting-processing-projection of images onto $\SE$ with optimal transport.
\item In \cref{lem:equivariance} we derive the invariance and equivariance properties of optimal transport on $\SE$. We theoretically establish that for a wide class of Lie Groups, including $\SE$, the left/right group action is generally \underline{not} equal to the optimal transport map between a final distribution that is a group action of an initial distribution in \cref{prop:compact_subgroup}.
\item We show the Lie group $\SE$ allows for an unique Riemannian and sub-Riemannian interpolation of orientation fields on $\R^2$ by lifting and projecting them from $\SE$ in \cref{fig:vf_interp} and  \cref{fig:vf_car}.
\item We apply an entropic Wasserstein gradient flow on $\SE$ for the porous media equation. We show that the choice of left-invariant anisotropic metrics leads to the equivariant \emph{line completion} behavior of the diffusion in \cref{fig:diff_erosion_1}.
\end{enumerate}
As exciting future work, we aim to extend to other Lie groups of practical significance like SO(3) and SE(3). In addition, developing accurate and efficient lifting and projection operations for Lie groups is also an avenue for prospective research in many practical applications where optimal transport could be applied. 

\section*{Acknowledgments}
We gratefully acknowledge the Dutch Foundation of Science NWO for its financial support by Talent Programme VICI 2020 Exact Sciences (Duits, Geometric learning for Image Analysis, VI.C. 202-031). Olga Mula gratefully acknowledges support from the DesCartes project.

\bibliographystyle{siamplain}
\bibliography{references}

\appendix
\clearpage

\section{$\SE$ Wasserstein Barycenter Algorithm}
We enumerate the $\SE$ Wasserstein barycenter algorithm below. This is essentially identical to Algorithm 2 from \cite{solomon2015convolutional} (and the scaling algorithm from \cite{chizat2018scaling} specified to the barycenter problem), but with necessary replacements (\cref{sec:ER_scaling}) for the Lie Group $\SE$, i.e.~using group convolutions and local distance approximations $\rho_b$.  
\vspace{0.5cm}
\label{app:algorithm}
\begin{algorithmic}
\BeginBox
\Function{ \SE Wasserstein-Barycenter}{$ \textrm{Inputs: } \{\mu_i \in \cP(G)\}_{i = 1}^{n} ; \textrm{Weights: } \{\lambda_i\}_{i = 1}^{n} \in \Sigma_n ; \textrm{Regularization Parameter: } \epsilon$}

\LComment{Notation}
\State $* \gets \SE\textrm{ group convolution (ref. \cref{subsec:gconv})}$ 
\State $\otimes \gets \textrm{pointwise multiplication; }$ $\ominus \gets \textrm{pointwise division}$

\LComment{Initialization}

\State $v_1, v_2,  \dotsm v_n \do \gets \mathbf{1}$ 
\State $w_1, w_2,  \dotsm w_n \do \gets \mathbf{1}$
\State $K_{\epsilon} \gets e^{-\frac{\rho_b^2}{\epsilon}}$ using \cref{eq:rhob}

\LComment{Sinkhorn Iterations with Iterated Bregman Projections}
\For{j = 1 \dots }

    \State $\mu \gets \mathbf{1}$

    \State \For{i = 1 \dots n}
    \State $w_i \gets \mu_i \ominus \{ K_{\epsilon} * v_i\}$ 
    \State $d_i \gets v_i \otimes \{ K_{\epsilon} * w_i\}$
    \State $\mu \gets \mu \otimes d_i^{\lambda_i}$
    \EndFor

    \State \For{i = 1 \dots n}

        \State $ v_i \gets v_i \otimes \mu \ominus d_i $ 
    
    \EndFor
    
\EndFor
\State \Return $\mu$
\EndFunction
\EndBox
\end{algorithmic}
\clearpage
\section{Invertibility of the Orientation Score Transform}
\label{app:reconstruct}
In this appendix, we summarise some important aspects of the orientation score transform, especially concerning its invertibility and the reconstruction of disk-limited functions. We refer to \cite{duits2007image, duits2010left} for a formal derivation and \cite{remcolecture} for an intuitive enumeration of all these results. 
The choice of wavelet function $\psi \in L^{2}(\mathbb{R}^2) \cap L^{1}(\mathbb{R}^2)$ affects the stability of the  orientation score transform $\mathcal{W}_{\psi}: L^{2,\varrho}(\mathbb{R}^2) \to L^{2}(SE(2))$. In practice, choosing cake-wavelets enables a stable, straightforward projection \cref{eq:proj} to reconstruct disk-limited images: 
\[
f \in L^{2,\varrho}(\mathbb{R}^2):= \{f \in L^{2}(\mathbb{R}^2)\;|\; \textrm{supp}(\mathcal{F}f) \subset B_{\mathbf{0},\varrho}\}
\]
where $\mathcal{F}$ denotes the  Fourier transform on $L^{2}(\mathbb{R}^2)$ and where
$B_{\mathbf{0},\varrho}=\{\omega \in \mathbb{R}^2\;|\; \|\omega\|<\varrho\}$. The value $\varrho$ is a priori fixed and in practice chosen close to the Nyquist frequency due to sampling. 
\\
{\bf Invertibility and Unitarity:} The orientation score transform is invertible and unitary (therefore also an isometry) \emph{but} with respect to a \emph{Sobolev} type of norm in a specific reproducing kernel Hilbert space defined on \SE. More specifically, the range of the orientation score transform is the unique reproducing kernel Hilbert space $\mathbb{C}_{K}^{\SE}$ consisting of complex-valued functions on $\SE$ with the reproducing kernel given by $K(g,h)=(\mathcal{U}_{g}\psi,\mathcal{U}_{h}\psi)$. As such we can define $W_{\psi}: L^{2,\varrho}(\mathbb{R}^2)\to\mathbb{C}_{K}^{\SE}$ again by  $(W_{\psi}f)(g)=(\mathcal{U}_{g}\psi,f)$. Then it turns out \cite[Thm.2]{remcolecture} that the the reproducing kernel norm  simplifies to 
\begin{equation}\label{eq:unitarity}
(U,V)_{M_{\psi}}=
\int \limits_{-\pi}^{\pi}
\int \limits_{\mathbb{R}^2} 
\overline{
\hat{U}(\omega,\theta)}
\hat{V}(\omega,\theta)
|M_{\psi}(\omega)|^{-1}{\rm d}\omega{\rm d}\theta 
\implies
\|f\|_{L^2}= \|\mathcal{W}_{\psi}f\|_{M_{\psi}}
\end{equation}
where $\hat{U}(\omega,\theta)=(\mathcal{F}U(\cdot, \theta))(\omega)$. The continuous isotropic function $M_{\psi}$ is given by:
\[
M_{\psi}(\omega)=
2\pi \int_{-\pi}^{\pi} 
|\hat{\psi}(R_{\theta}^{-1}\omega)|^2\; {\rm d}\theta,
\]
where $\hat{\psi}=\mathcal{F}\psi$. Intuitively $M_{\psi}(\omega)$ controls how well all the group coherent wavelets together fill up in the Fourier domain. Since $W_{\psi}$ is unitary we have an \emph{exact} image reconstruction \cite[eq.2.8]{duits2010left} given by:
\begin{equation}\label{eq:exactreco}
f= W_{\psi}^* W_{\psi} f =
\int \limits_{-\pi}^{\pi} 
\mathcal{F}^{-1}
\left[
\widehat{
W_{\psi}f}(\cdot,\theta)
 \; \frac{\hat{\psi}(R_{\theta}^{-1}
\cdot)}{M_{\psi}(\cdot)} \;  \right] {\rm d}\theta.
\end{equation}
Since we are interested in $f \in L^{2,\varrho}(\R^2)$, we must ensure that $\psi$ is chosen such that 
\begin{equation} \label{eq:stability}
\delta \leq M_{\psi}(\omega) \leq M , \textrm{for all }\omega \in B_{\mathbf{0},\varrho}, \textrm{ for some }\delta, M>0.
\end{equation}
{\bf Stability:} 
When $\delta$ and $M$ are chosen to be respectively the maximum and minimum of the continuous function $M_{\psi}$ constrained to the closed ball
$\|\omega\| \leq \varrho$ then the condition number (from $L^2$ to $L^2$) of the original orientation score transform
$\mathcal{W}_{\psi}$ equals $M/\delta$.
Wavelets that satisfy (\ref{eq:stability}) and guarantee a good condition number are called \emph{proper wavelets}, cf.~ \cite[Thm.3, Def. 15]{remcolecture}.
Proper wavelets do not tamper with data evidence and allow for stable (exact) reconstruction. 
\\
{\bf Projection/Fast Reconstruction Property:} 
Additionally, some proper wavelets also have a fast (approximative) reconstruction property. This means that 
\begin{equation}
\label{eq:fastreco}
\int_{-\pi}^{\pi}
\hat{\psi}_{\theta} \; {\rm d}\theta \approx 1_{B_{\mathbf{0},\varrho}}  \Leftrightarrow 
\forall_{f \in L^{2,\varrho}(\mathbb{R}^2)} \;:\; 
\int_{-\pi}^{\pi} 
\mathcal{W}_{\psi}f(\cdot,\theta)\; {\rm d}\theta
=
\int_{-\pi}^{\pi} \psi_{\theta} \;  {\rm d}\theta *f \approx f 
\end{equation}
with $\psi_{\theta}(\mathbf{x})=\psi(R_{\theta}^{-1} \mathbf{x})$,
for details see \cite[Def.15]{remcolecture}. 
\\
{\bf Cake-wavelets:} 
Cake-wavelets satisfy all the abovementioned properties: Unitarity \cref{eq:unitarity}, Stability \cref{eq:stability}, and Fast Reconstruction \cref{eq:fastreco}. 
In the Fourier domain, cake-wavelets smoothly approximate indicator functions supported on cones allowing us to omit the extra term in (\ref{eq:exactreco}) in order to obtain a stable and accurate reconstruction \cref{eq:fastreco}. We refer to \cite[ch.2.1.5]{bekkers2017retinal} for exact formulas and intuitive illustrations of the construction of cake-wavelets and \cite{duits2010left,vandenbergJMIV2023,bekkers2014multi,baspinar2021cortical, bertalmio2021cortical} for their different applications in practice. 

\section{Idealized Lifts of Orientation Fields in $\R^2$ to $\SE$
\label{subsec:lift_of}}
Identify $S^1 \equiv \R/(2\pi \mathbb{Z}) \equiv [0,2\pi)$.
Suppose we have an \textit{orientation field} \(\theta : \Omega \to [0,2\pi)\) with Lebesgue measure $\mu(\Omega)$ nonzero and finite. 
We can lift such an orientation field to a probability measure $U \in \cP(\SE)$ given by:
\begin{equation} \label{equ:lifting_vector_field2}
    U(x,y, \cdot) =
    \left\{
    \begin{array}{ll}
    \frac{1}{\mu(\Omega)} \delta_{\theta(x,y)}(\cdot) &\textrm{if }(x,y) \in \Omega \\
    0 & \textrm{ else }
    \end{array}
    \right.
\end{equation}
where we put a uniform distribution on $\Omega$ and at each position $(x,y) \in \Omega$ a Dirac measure w.r.t. angular variable $\theta$ centered at $\theta(x,y)$. 
Equation \cref{equ:lifting_vector_field2} can be interpreted as an \emph{idealized} lift. 
One can reconstruct the orientation field
$\theta: \Omega \to [0,2\pi)$
from the support of $U$ by
\begin{equation} 
    \theta(x,y) = 
        \theta \text{ if } (x,y,\theta) \in \text{supp } U \textrm{ for all }(x,y) \in \Omega.
\end{equation}
In practice, however, we discretize both in space and orientation, and the intermediate probability measures obtained through the optimal transport have a support that is spread out over $\theta$ in general.
This means this lift and projection above can \textit{not} be used unadapted in our application.
We explain a practical approximation in \cref{sec:vector-fields}  that comfortably addresses this issue and we show meaningful interpolation of orientation fields even with non-uniform spatial distributions (cf. equation \cref{equ:lifting_vector_field}). This approach could easily be generalized to B-spline orientation channels of a higher degree \cite{Felsberg2} than 0.
\begin{figure}[h]
    \centering   
    \subfloat{\includegraphics[width=1\linewidth]{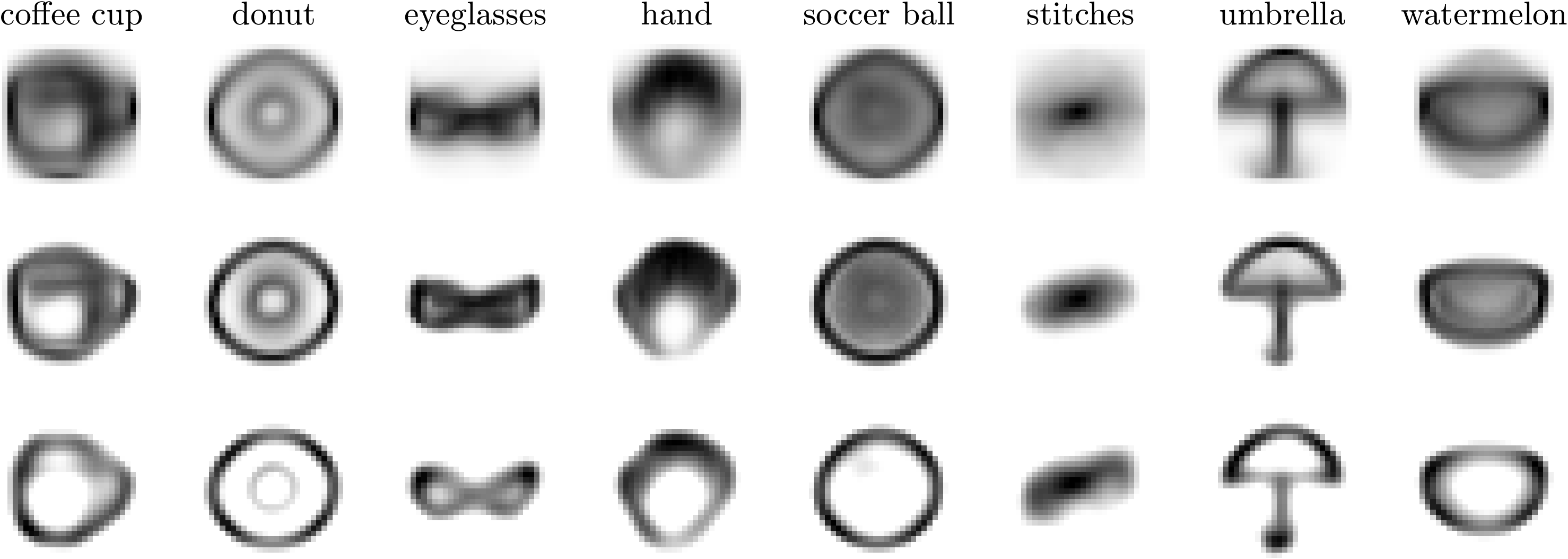}}
    \caption{Average barycenters of ($ \sim 6000$ images per class) images from the QuickDraw dataset. {\bf 1\textsuperscript{st} Row}: $L^2$ barycenter of all images per class - \LR. {\bf 2\textsuperscript{nd} Row}: 2-Wasserstein barycenter in $\mathbb{R}^2$ - \WR, {\bf 3\textsuperscript{rd} Row}: 2-Wasserstein barycenter in $\SE$ - \SER. \SER shows sharper interpolations in comparison with its counterparts on $\mathbb{R}^2$.} 
    \label{fig:doodle_multiple_2}
\end{figure}
\section{Additional Examples}
\label{sec:additional_results}
We report some additional numerical examples using our method. In \cref{fig:doodle_multiple_2} we show additional barycenters from the Quickdraw\footnote{\url{https://quickdraw.withgoogle.com/data}} dataset and compare with \LR, \WR and \SER. Then, in \cref{fig:twos} we show another example similar to \cref{fig:umbrella} where we compare the barycenter interpolation between \WR and \SER on MNIST. Finally in \cref{fig:diff_erosion_2} we show an additional entropic Wasserstein gradient flow demonstration with a slightly more challenging line completion similar to \cref{fig:diff_erosion_1}. 
\begin{figure}[t]
    \centering   
    \includegraphics[width=1\linewidth, page=1]{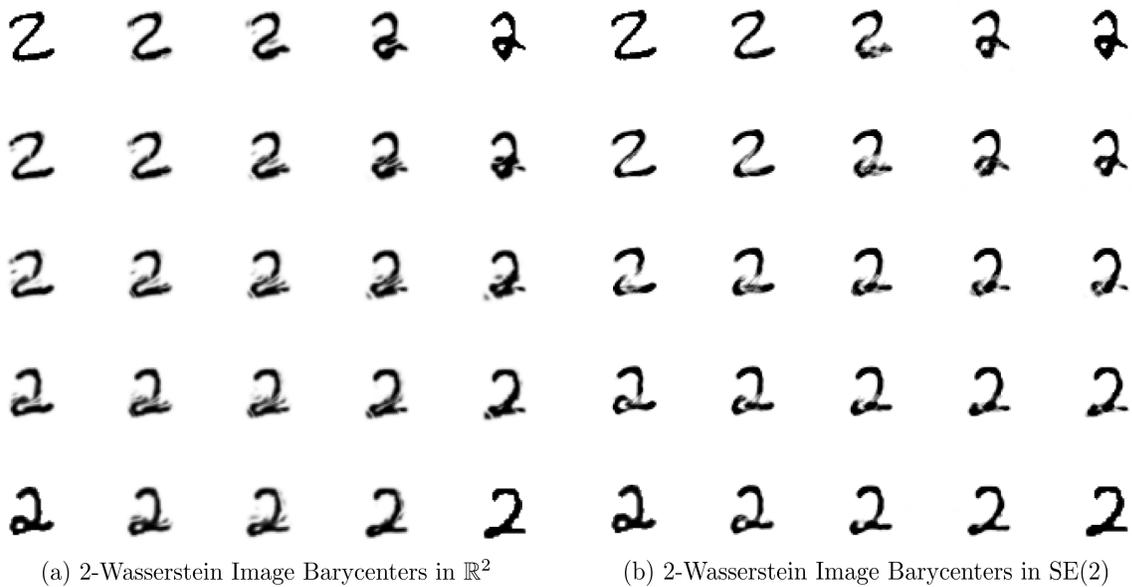}
    \caption{\textbf{Comparing image barycenters in $\R^2$ and $\SE$:} Another example similar to \cref{fig:umbrella}, with MNIST digits. The Images at the corners are interpolated using linear weights. Again, one observes a more meaningful interpolation with left-invariant anisotropic optimal transport in $\SE$} 
    \label{fig:twos}
    \vspace{0.5cm}
\end{figure}
\begin{figure}[t]
    \centering   
    \includegraphics[width=1\linewidth]{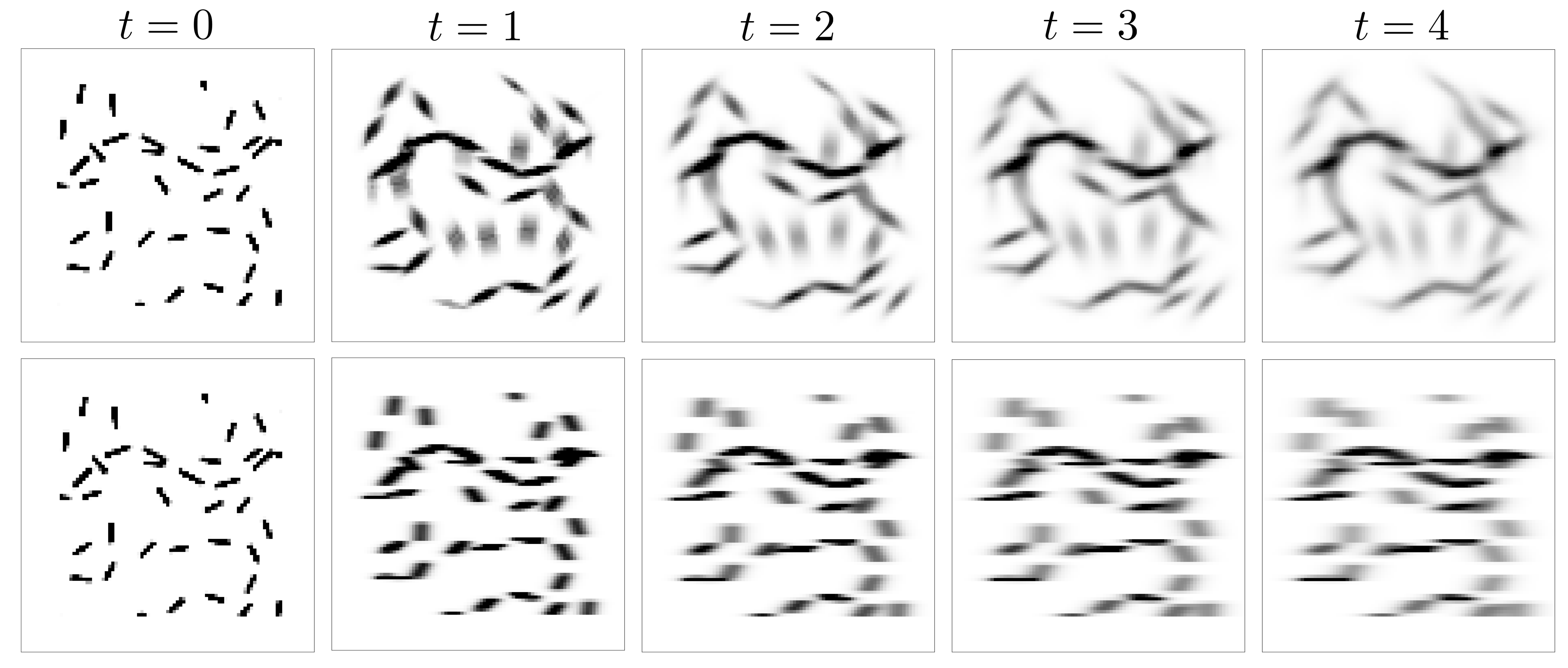}
    \caption{\textbf{Gradient flow in $\SE$} Another example similar to \cref{fig:diff_erosion_1} where we implement a entropic Wasserstein gradient flow for the PDE \cref{eq:pde_ndiff} on both $\SE$ (top row) and $\R^2$ (bottom row) using the same initialization. Here, we apply an \emph{anisotropic} diffusion on $\SE$ \underline{and} $\R^2$. Note that OT with this metric on $\R^2$ is no longer equivariant to rotations. The rototranslation equivariant $\SE$ flow (top row) shows a more distinct line completion behavior than the $\R^2$ flow (bottom) which adheres to a global directional preference.} 
    \label{fig:diff_erosion_2}
\end{figure}
\end{document}